\let\footnote=\endnote
\def\PP{{\mathbb P}}
\def\E{{\mathbb E}}
\def\S{{\mathcal S}}
\newcommand{\Var}{\text{Var}}
\newcommand{\Cov}{\text{Cov}}
\begin{document}
\RUNAUTHOR{Yi Zhu, Jing Dong and Henry Lam}
	
\RUNTITLE{Inference and Exploration for Reinforcement Learning}

\TITLE{Uncertainty Quantification and Exploration for Reinforcement Learning}

%

\ARTICLEAUTHORS{%
	\AUTHOR{Yi Zhu}
	\AFF{Department of Industrial Engineering and Management Sciences,
	Northwestern University, Evanston, IL 60208, \EMAIL{yizhu2020@u.northwestern.edu}} 
	\AUTHOR{Jing Dong}
	\AFF{Division, Risk and Operations Division, Columbia Business School, New York, NY 10027, \EMAIL{jing.dong@gsb.columbia.edu}}
	\AUTHOR{Henry Lam}
	\AFF{Department of Industrial Engineering and Operations Research, Columbia University, New York, NY 10027, \EMAIL{henry.lam@columbia.edu}}
} 


\ABSTRACT{%
	We investigate statistical uncertainty quantification for reinforcement learning (RL) and its implications in exploration policy. Despite ever-growing literature on RL applications, fundamental questions about inference and error quantification, such as large-sample behaviors, appear to remain quite open. In this paper, we fill in the literature gap by studying the central limit theorem behaviors of estimated Q-values and value functions under various RL settings. In particular, we explicitly identify closed-form expressions of the asymptotic variances, which allow us to efficiently construct asymptotically valid confidence regions for key RL quantities. Furthermore, we utilize these asymptotic expressions to design an effective exploration strategy, which we call Q-value-based Optimal Computing Budget Allocation (Q-OCBA). The policy relies on maximizing the relative discrepancies among the Q-value estimates. Numerical experiments show superior performances of our exploration strategy than other benchmark policies.
}%

\KEYWORDS{reinforcement learning, statistical inference, policy exploration, OCBA}


\maketitle
\section{Introduction}

We consider the standard reinforcement learning (RL) setting where the agent 
interacts with a random environment and aims to maximize the accumulated discounted reward over time. The environment is formulated as a Markov decision process (MDP) and the agent is uncertain
about the true dynamics to start with. As the agent interacts with the environment, 
data about system dynamics are collected and the agent becomes increasingly confident about her decision. With finite data, however, the potential reward from each decision is estimated with errors. The goal of the agent is to make decisions as close to optimal as possible over time, given the collected data at hand.

In this paper, we focus on statistical uncertainty quantification in RL and its implications in exploration policy. More precisely, we investigate the large-sample behaviors in estimating the so-called Q-values (action-value functions) and the associated optimal value functions, in the form of central limit convergences to Gaussian (or other) distributions. We explicitly characterize the asymptotic variances of the limiting distributions in terms of the system parameters (i.e., transition probabilities, mean rewards, and data-collection policies), from which we can construct statistically valid confidence intervals/regions. These closed-form formulas, moreover, are utilized in a novel procedure that we call Q-value-based Optimal Computing Budget Allocation (Q-OCBA) to efficiently explore the underlying MDP. 

The motivation for our investigation is twofold. First, like classical statistical inference, large-sample behavior and confidence interval construction are fundamental in assessing the error and reliability of estimated quantities with respect to the data noises. Despite the ever-growing literature on RL applications, some of these fundamental questions appear to remain quite open. Our first goal in this paper is to fill in the literature gap by studying the large sample behaviors of RL. In the RL context, the key objects that measure performances and determine optimal policies are the Q-values, namely the maximum cumulative rewards that initialize at given state-and-action pairs. Closely related and derivable from the Q-values are the optimal value functions, namely the maximum cumulative rewards initialized at given states. We derive appropriate large-sample asymptotic distributions that allow us to construct confidence regions for Q-value and  optimal-value-function estimations when data are collected from the underlying Markov chain. In addition to quantifying the error of each value estimate, our results also allow one to evaluate the assertiveness of performances among different decisions. For example, while one state-action pair may appear better by reading the point estimate of its Q-value, its variability, which is estimated via our approach, can also be larger, and should be properly accounted for when deciding whether the action should be selected given the state.



Our investigation on the limit theorems and confidence region constructions is in line with the so-called input uncertainty problem in stochastic simulation \citep{barton2012tutorial,song2014advanced,lam2016advanced}.
The latter problem aims to quantify the impacts on simulation outputs due to the statistical noises incurred in fitting the input models that generate the fed-in random variates. Quantifying these impacts requires estimating the output variability contributed from the input noises (e.g., \citealt{song2015quickly,cheng1997sensitivity}), or constructing output confidence intervals that properly account for input uncertainty (e.g., \citealt{cheng2004calculation,barton2013quantifying}). In stochastic simulation, the inputs often take the form of parametric models (e.g., \citealt{xie2014bayesian,zhu2020risk}), a finite collection or mixture of parametric models (e.g., \citealt{chick2001input,zouaoui2004accounting}), or in the nonparametric case, empirical distributions where independence of random variates is typically assumed (e.g., \citealt{barton2001resampling,yi2017efficient}). Our study can be viewed as addressing the input uncertainty problem for MDPs, where the input model now takes the form of Markov transition matrices and reward distributions. In addition to the Markovian structure, another distinction of our work from conventional input uncertainty is the involvement of optimization in our target quantity (both the Q-values and the optimal value functions). In this regard, our work also relates to the uncertainty quantification of stochastic optimization problems, which aims to understand the sub-optimality of obtained solutions due to the statistical noise in constructing the objective functions \citep{mak1999monte,bayraksan2006assessing}. Similar to the sample average approximation and stochastic programming literature (e.g., \citealt{shapiro2014lectures,higle2013stochastic}), our limit theorems capture the statistical fluctuations of estimated optimal values. However, instead of a finite number of stages or having independent variates across stages, we focus on the error quantification of maximum cumulative rewards, which necessitate the intricate use of the Bellman equation machinery and the stationarity of the underlying Markov chain. 

Besides input uncertainty, we describe our methodological advancements relative to the RL literature. Our results complement the existing finite-sample bounds (e.g., \citealt{Kearns1998,Kakade:2003,munos2008finite}) by offering closed-form asymptotic variances that often show up in the first-order terms in these bounds. In constructing confidence intervals (and also in using variance-based exploration strategies like Q-OCBA that we detail further below), finite-sample error bounds are typically conservative. In contrast, in our development, the dependence of asymptotic variance on key parameters, such as the reward and transition variabilities, and the frequency of visits for each state-action pair, is tightly characterized. To this end,  \citet{mannor2004bias,mannor2007bias} are among the first and very few to study asymptotically tight statistical properties of value function estimation. They investigate the bias and variance in value function estimates under a fixed policy.  Methodologically, our technique resolves a main technical challenge in \citet{mannor2004bias,mannor2007bias} that allows us to generalize their variance results to Q-values and optimal value functions. The derivation in  \citet{mannor2004bias,mannor2007bias} hinges on an expansion of the value function in terms of the perturbation of the transition matrix, which (as pointed out by the authors) is not easily extendable from the value function under a fixed-policy to the optimal value function. In contrast, our results utilize an implicit function theorem applied to the Bellman equation which can be verified to be sufficiently smooth. This idea turns out to allow us to obtain gradients for Q-values, translate the asymptotic variance of Q-values to the optimal value function, and furthermore generalize similar asymptotic distributional results for constrained MDPs, approximate value iteration, and kernel estimation. Recently, \cite{devraj2017zap} and \cite{chen2020explicit} derive central limit convergence results and characterize the corresponding asymptotic variances for some parametric Q-learning algorithms. Their development builds on an elegant connection between Q-learning and stochastic approximation. On the other hand, our main asymptotic result (Theorem \ref{thm:clt_basic}) does not impose any parametric assumption on the Q-values.

Our work is also related to the line of studies on dynamic treatment regimes (DTR) applied commonly in medical decision-making \citep{laber2014dynamic}. DTR focuses on the statistical properties of policies on finite horizons (such as two-period). 
Our infinite-horizon results on the optimal values and Q-values distinguish our developments from the DTR literature. 
Indeed, \citet{laber2014dynamic} list the statistical properties for the infinite-horizon case as an ``open problem''. 
Moreover, as we will elaborate in the sequel, our result for the non-unique policy case can be demonstrated to correspond to the ``non-regularity'' concept in DTR, where the true parameters are very close to the decision ``boundaries" that switch the optimal policy (motivated by situations of small treatment effects), thus making the obtained policy highly sensitive to estimation noises. 



The second motivation of our study is to design good exploration policies by directly using our tight error estimates derived from our limit theorems. We consider the pure exploration setting where an agent is first assigned a period to collect as much experience as possible, and then, with the optimal policy trained offline, starts deployment to gain reward. The goal is to collect as informative data as possible in the first stage via a good exploration strategy such that, at the end of this stage, our recommended policy coincides with the true optimal policy of the MDP with the highest probability.
This setting is motivated by recent autonomous-driving applications \citep{kalashnikov2018qt}, and differs from classical learning settings
that balance the exploration-exploitation trade-off: exploring poorly-understood states and actions in the hope of improving future performance versus exploiting existing knowledge to attain better reward now \citep{jaksch2010near}. Crucially, our setting has a dedicated learning stage in which no rewards are collected, and the focus is to select the best policy using the data collected in the learning stage.  The separation of learning and execution stages is particularly attractive when extra safety-protection tools could be incorporated into the learning stage.
For example, to derive safe driving policies for autonomous vehicles,  ``soft crash" (or ``near crash") perceived by high-precision sensors is used to reveal the potential risks of unsafe driving behaviors and avoids real physical damage to the car \citep{kiran2020deep, amini2020learning}. 
The learning stage can also sometimes be carried out through a simulation platform where real maneuvers could be virtually simulated \citep{schoner2017role,corso2020survey}.



Our main contribution in regard to the second motivation is the design of a pure exploration policy by maximizing the worst-case relative discrepancy among the estimated Q-values (ratio of the mean squared difference to the variance). We coin our strategy Q-OCBA, as our policy search criterion resembles the widely known Optimal Computing Budget Allocation (OCBA) procedure in simulation optimization \citep{chen2011stochastic}.
OCBA has been successfully applied to ranking and selection \citep{glynn2004large,gao2017robust} where the search space is a finite number of discrete simulatable alternatives, a variant called targeting and selection \citep{ryzhov2018local}, random search \citep{chen2013optimal}, stochastically constrained problems and feasibility determination  \citep{lee2012approximate,gao2016efficient}, and quantile-based selection \citep{peng2018efficient}. In sequential OCBA implementations, one divides the computation budget into stages in which one sequentially updates mean and variance estimates, and optimizes next-stage budget allocations according to the worst-case relative discrepancy criterion \citep{chen2006efficient}. Like OCBA, the worst-case relative discrepancy in Q-OCBA serves as a proxy for the probability of selecting the optimal policy, which is analogous to the probability of correct selection in ranking and selection. Nonetheless, Q-OCBA bears two important distinctions: One is that we utilize the Q-value estimates and asymptotic variance estimates, derived via our limit theorems, in our maximization criterion. The other is that while the allocation in OCBA can be derived and implemented straightforwardly, it is not the case in RL. Because of the underlying Markov chain, the desired allocation has to be implemented through a randomized policy that gives rise to a stationary distribution of state-action pairs that matches the desired allocation. To address this issue, we design a computationally tractable scheme to obtain the randomized policy. More precisely, through our derived asymptotic variance formula that depends explicitly on the frequency of visits to each state-action pair, and by characterizing all the feasible stationary distributions of the MDP as a set of linear constraints, we can derive our Q-OCBA exploration policy by solving a convex optimization problem. We further propose a sequential updating rule to implement the exploration strategy in practice. 

To the best of our knowledge, our work is among one of the few to concretely use the variances of Q-value estimates and link budget allocation to develop exploration policies for MDP. The only other work that uses OCBA-based criterion for efficient budget allocation in MDP is \citet{jia2012efficient}, which is for a different setting. Similar to our work, the objective is to maximize the probability of selecting the optimal action at each state. However, they consider a setting where the state-action pair at each ``step" can be arbitrarily chosen and the Q-values for different state-action pairs are estimated independently by simulation. This allows them to use standard sample variance for variance estimation. In contrast, we consider the ``online" setting where the next state is determined by the current state-action pair through the underlying Markov chain. This poses a substantial challenge to constructing a good variance estimator. 
Interestingly, one way to further improve the efficiency of the strategy proposed in \citet{jia2012efficient} is to reuse (at least partially) the sample simulated for the Q-value estimation at one state-action pair to estimate the Q-value at another state-action pair. However, this would require sophisticated derivations of the corresponding variances as in our setting. Another related work is \cite{devraj2017zap}, which uses asymptotic variances to optimize the corresponding Q-learning algorithm with certain parameterization. They consider the classic regret minimization objective, which is different from our pure exploration setting.

More generally, we also position Q-OCBA relative to the online learning literature. In particular, our pure exploration problem resembles the so-called best-arm identification in multi-armed bandits \citep{audibert2010best,kaufmann2016complexity, russo2016simple},
which studies procedures to efficiently identify alternatives with the highest reward. This problem is intricately related to ranking and selection in simulation \citep{kim2007recent}
in that they both focus on the performance of the selected alternative at the end of the learning period, without concerning the interim rewards, but with differences in the analytical tools and the types of complexity guarantees. The ranking and selection or best-arm identification problems can be viewed as a one-state special case of the pure exploration setting in RL, which is much less studied (the only recent work that we are aware of is \citet{putta2017pure}). Compared to the former problem that often assumes independence across samples from different alternatives, RL is more complex due to the long-term effect of an action that could determine the future frequency of visits for many state-action pairs. Q-OCBA provides an implementable avenue to obtain a good policy for precisely this purpose. Recently, there are also extensions of ranking and selection to context-dependent settings, where rewards can depend on agent's covariates \citep{shen2021ranking}. Although in both RL and contextual ranking and selection, the rewards are state-dependent, in RL, the state variables follow a Markov chain, whereas in contextual ranking and selection, the state variables (covariates) are static. 

Most existing RL algorithms consider settings where one solves for the optimal policy while simultaneously learning the dynamics of the underlying MDP. The objective is to minimize the regret, which gives rise to a non-trivial exploration-exploitation trade-off. In this context, policies that focus on more systematic exploration such as posterior sampling reinforcement learning (PSRL) and various versions of the upper confidence bound algorithm (e.g., UCRL2, UCBVI) have been developed and shown to achieve good regret bounds \citep{osband2013more, azar2017minimax, jin2018q}. Recently, \cite{bellemare2017distributional}  advocates learning an approximate value distribution rather than its expectation to obtain more stable learning. In this paper, we consider a very different setting -- a pure exploration task.
Thus, our policy focuses more on exploration than the aforementioned policies (which were designed for a different objective). To support our approach, we show through numerical experiments that Q-OCBA consistently achieves superior performances compared to an array of benchmark policies in the literature (see Section \ref{sec:num_compare}).

The rest of the paper is organized as follows. Section \ref{sec:problem} first describes our MDP setup. Section \ref{sec:ucq} presents our results on large-sample behaviors. Section \ref{sec:explore} demonstrates their use in developing exploration strategies. Section \ref{sec:approx} studies two approximations schemes to deal with large state and/or action spaces. Section \ref{sec:numerics} substantiates our findings with simulation experiments. Section \ref{sec:con} provides some concluding remarks. All proofs are deferred to the appendix. 


\subsection{Notations}

We introduce some notations used throughout the paper. We denote ``$\Rightarrow$'' as ``convergence in distribution'', and $\mathcal{N}(\mu,\Sigma)$ as a multivariate Gaussian distribution with mean vector $\mu$ and covariance matrix $\Sigma$. We write $I$ as the identity matrix, and $e_i$ as the $i$-th unit vector. We also write ${\bf 0}$ as a zero vector or matrix. The dimension of $\mathcal{N}(\mu,\Sigma)$, $I$, $e_i$, and ${\bf 0}$ should be clear from the context. When not specified, all the vectors are column vectors. For $\epsilon>0$, we define $a\pm\epsilon:=[a-\epsilon,a+\epsilon]$, i.e., it is a closed interval with lower bound $a-\epsilon$ and upper bound $a+\epsilon$. Lastly, $\mathds{1}(\cdot)$ denotes the indicator function.

\section{Problem Setup}\label{sec:problem}
Consider an infinite horizon accumulated discounted reward MDP, 
$\mathcal{M} = (\mathcal{S}, \mathcal{A}, R, P, \gamma, \rho)$,
where $\mathcal{S}$ is the state space, $\mathcal{A}$ is
the action space, $R(s,a)$ denotes the random reward when the agent is in state $s\in\mathcal{S}$ and selects action $a\in\mathcal{A}$,
$P(s'|s,a)$ is the probability of transitioning to state $s^{\prime}$ in the next epoch given the current 
state $s$ and taken action $a$, $\gamma$ is the discount factor, and $\rho$ is the initial
state distribution. The distribution of the reward  $R$ and the transition probability $P$ are unknown to the agent.

We assume both $\mathcal{S}$  and $\mathcal{A}$ are finite sets.  
Without loss of generality, we denote $\mathcal{S} = \{1,2, \dots, m_s \}  \mbox{ and } \mathcal{A} = \{1,2, \dots, m_a \}$. 
We also make the following stochasticity assumption:
\begin{assumption}\label{assum:ind}
	$R(s,a)$ has a finite mean $\mu_R(s,a)$ and a finite variance $\sigma_R^2(s,a)$,  $\forall$ $(s,a)\in \mathcal{S}\times \mathcal{A}$. For any given $(s,a)\in\mathcal S\times \mathcal A$, 
	$R(s,a)$ and $s^{\prime} \sim P(\cdot|s,a)$  are independent random variables.
\end{assumption}

A policy $\pi$ is a mapping from each state $s\in \mathcal{S}$ to a probability measure  over actions in $\mathcal{A}$.
Specifically, we write $\pi(a|s)$ as the probability of taking action $a$ when the agent is in state $s$ and $\pi(\cdot|s)$ as the $m_a$-dimensional vector of action probabilities at state $s$. For convenience, we sometimes write $\pi(s)$ as the realized action given the current state $s$. 
The value function associated with a policy $\pi$ is defined as 
\[V^{\pi}(s) =\mathbb{E}^{\pi}\left[ \sum^{\infty}_{t=0} \gamma^t R(s_t, \pi(s_t)) |s_0 =s\right]\] 
with $s_{t+1} | (s_t, \pi(s_t)) \sim P(\cdot|s_t, \pi(s_t))$. The expected value function, under the initial distribution $\rho$, is denoted by 
\[\chi^{\pi}= \sum_{s} \rho(s) V^{\pi}(s).\]
A policy $\pi^{*}$ is said to be optimal if $V^{\pi^{*}}(s) = \max_{\pi} V^{\pi}(s)$ for all $s\in \mathcal{S}$. For convenience, we denote $V^*=V^{\pi^*}$ and $\chi^*=\sum_{s} \rho(s) V^*(s)$. The Q-value at $(s,a) \in \mathcal{S} \times \mathcal{A}$, denoted by $Q(s,a)$, is defined as  $Q(s,a)= \mu_R(s,a) + \gamma \mathbb{E}[V^{*}(s')|s,a]$. Correspondingly, $V^*(s) = \max_{a} Q(s,a)$ and the Bellman equation for the Q-values takes the form
\begin{eqnarray}\label{eq:bellman}
Q(s,a) = \mu_R(s,a) + \gamma \mathbb{E}\left[\max_{a'} Q(s',a')|s,a\right].
\end{eqnarray}
Denoting the Bellman operator as $\mathcal{T}_{\mu_R,P}(\cdot)$, $Q$ is a fixed point associated with  $\mathcal{T}_{\mu_R,P}$, i.e., $Q=\mathcal{T}_{\mu_R,P}(Q)$.
For most part of this paper, we make the following assumption about $Q$:
\begin{assumption} \label{assum:unique}
	For any state $s\in \mathcal{S}$, $\arg\max_{a\in \mathcal{A}} Q(s,a)$ is unique.
\end{assumption}
Under Assumption \ref{assum:unique}, the optimal policy $\pi^*$ is unique and deterministic. Let 
\[a^*(s)=\arg\max_{a\in \mathcal{A}} Q(s,a).\] 
Then $\pi^*(a|s)=\mathds{1}\left(a=a^*(s)\right)$. We relax this assumption in Section \ref{sec:non-unique}.

We next introduce some statistical quantities arising from data. 
Suppose we have $n$ observations (whose collection mechanism will be made precise later), which we denote as 
\[\{(s_t, a_t, r_t(s_t,a_t), s_t^{\prime}(s_t, a_t)): 1\leq t \leq n\},\] 
where $r_t(s_t,a_t)$ is the realized reward at time $t$ and $s_t^{\prime}(s_t, a_t)=s_{t+1}$. We define the sample mean $\hat{\mu}_{R,n}$ and the sample variance $\hat \sigma^2_{R,n}$ of the random rewards as
\begin{equation}
\hat{\mu}_{R,n}(s=i, a=j)= \begin{cases} \frac{\sum_{1\leq t \leq n} r_t(s_t, a_t)\mathds{1}(s_t=i, a_t=j)}{\sum_{1\leq t\leq n} \mathds{1}(s_t=i, a_t=j)}, \quad \text {if $\sum_{1\leq t\leq n} \mathds{1}(s_t=i, a_t=j)>0$ }\\ 
u_R^0(i,j) \quad \text {if $\sum_{1\leq t\leq n} \mathds{1}(s_t=i, a_t=j)=0$; }\\
\end{cases} \label{mu estimate}
\end{equation}
\begin{equation}
\hat{\sigma}^2_{R,n}(s=i,a=j)= \begin{cases}
\frac{\sum_{1\leq t \leq n} r_t(s_t, a_t)^2\mathds{1}(s_t=i, a_t=j)}{\sum_{1\leq t\leq n} \mathds{1}(s_t=i, a_t=j)}-\hat{\mu}_{R,n}(i, j)^2  \quad \text {if $\sum_{1\leq t\leq n} \mathds{1}(s_t=i, a_t=j)>0$ } \\
s_R^0(i,j) \quad \text {if $\sum_{1\leq t\leq n} \mathds{1}(s_t=i, a_t=j)=0$, }
\end{cases} \nonumber 
\end{equation}
where $u_R^0(i,j)\in \mathcal{R}$ and $s_R^0(i,j)\geq 0$ are some appropriate initial values for the mean and variance if no data is available.

Similarly, we define the empirical transition matrix $\hat{P}_n$ as
\begin{equation}
\hat{P}_n(s' =k | s=i, a=j) = \begin{cases}
\frac{\sum_{1\leq t\leq n} \mathds{1}(s_t=i, a_t=j, s_t'(s_t,a_t) = k)}{\sum_{1\leq t\leq n} \mathds{1}(s_t=i, a_t=j)} \quad \text {if $\sum_{1\leq t\leq n} \mathds{1}(s_t=i, a_t=j)>0$}\\
p^0(k|i,j) \quad \text {if $\sum_{1\leq t\leq n} \mathds{1}(s_t=i, a_t=j)=0$,}
\end{cases}\label{P estimate}
\end{equation}
where  $p^0$ with $p^0(k|i,j)\geq 0$ and  $\sum_k p^0(k|i,j)=1$ is some appropriate initialization of the transition matrix if no data is available. 

Note that $u^0(i,j)$, $s^0(i,j)$, and $p^0(k|i,j)$ above can be any reasonable initialization 
when the corresponding state-action pairs have not been visited yet. For example, we can initialize $\hat{P}_n(s' =k | s=i, a=j) = 1/m_s$, $\hat{\mu}_{R,n}(s=i, a=j)=0$, and $\hat{\sigma}^2_{R,n}(s=i,a=j)=1$ if there is no good prior knowledge.

Furthermore, we define the $m_s\times m_s$ sampling covariance matrix $\Sigma_{P_{s,a}}$ (with one sample point of $\mathds{1}(s_t=s, a_t=a)$) as
\begin{equation}\label{eq:Sigma_P}
\Sigma_{P_{s,a}}(k_1,k_2) = \begin{cases} P(k_1|s,a)(1-P(k_1|s,a))\quad k_1=k_2\\ 
-P(k_1|s,a) P(k_2|s,a) \quad k_1\neq k_2. 
\end{cases},\  \text{for\ }1\leq k_1\leq m_s,1\leq k_2\leq m_s. 
\end{equation}
With the data, we construct our estimate of $Q$, called $\hat{Q}_n$, via the empirical fixed point of the Bellman operator,
i.e., 
\[\hat{Q}_n =\mathcal{T}_{\hat{\mu}_{R,n}, \hat{P}_n}(\hat{Q}_n).\] 
Correspondingly, we write 
\[\hat{V}^{*}_n(s) = \max_{a\in\mathcal{A}} \hat{Q}_n(s,a) ~\mbox{ and }~
\hat{\chi}^{*}_n=\sum_{s\in\mathcal{S}}\rho(s)\hat{V}^{*}_n(s).\] 

Throughout the paper we focus on the estimation errors due to the noise of collected data, and assume the MDP or Q-value evaluation given the estimated average reward and transaction matrix can be obtained exactly.





\section{Quantifying Asymptotic Estimation Errors} \label{sec:ucq}


In this section, we present an array of results regarding the asymptotic behaviors of $\hat Q_n$ and $\hat V_n^*$ as $n\rightarrow\infty$. 
We start with the standard case where each state has a unique optimal action, i.e., Assumption \ref{assum:unique} holds.
We then discuss two important extensions: non-unique optimal action and constrained MDP. 

To prepare, we first make an assumption on our exploration policy $\pi$ to gather data. Define the extended transition probability $\tilde P^{\pi}$ under policy $\pi$ as 
\[\tilde P^{\pi}(s^{\prime}, a^{\prime}|s,a)=P(s^{\prime}|s,a)\pi(a^{\prime}|s^{\prime}).\] 

\begin{assumption} \label{assum:postive_stat}
	The Markov chain with transition probability $\tilde P^{\pi}$ is irreducible. 
\end{assumption}
Under Assumption \ref{assum:postive_stat}, as we have a finite state space and action space, $\tilde P^{\pi}$ has a unique stationary distribution, denoted by $w$, which is equal to the long run frequency of visiting each state-action pair, i.e., for $i\in \mathcal{S}$ and $j\in\mathcal{A}$ 
\[w(i,j)=\lim_{n\to \infty} \frac{1}{n}\sum_{1\leq t \leq n} \mathds{1}(s_t=i, a_t=j),\] 
where all $w(i,j)$'s are positive. Note that Assumption \ref{assum:postive_stat} is satisfied if for any two states $s,s'$, there exists a sequence of actions such that $s'$ is attainable from $s$ under $P$, and if $\pi$ is sufficiently mixed, e.g., $\pi$ satisfies $\pi(a|s)>0$ for all $s,a$. We define
\[\hat w_n(i,j) :=\frac{1}{n}\sum_{1\leq t \leq n} \mathds{1}(s_t=i, a_t=j),\]
which is the empirical frequency of visiting each state-action pair.

Let $N=m_sm_a$. In our subsequent algebraic derivations, we need to re-arrange $\mu_R$, $Q$, and $w$ as $N$-dimensional vectors. With a little abuse of notation, we define the following indexing rule:
$(s=i,a=j)$ is re-indexed as $(i-1)m_a+j$, e.g., 
\[\mu_R(i,j)=\mu_R((i-1)m_a+j).\]
We also need to re-arrange $\tilde P^{\pi}$ as an $N\times N$ matrix following the same indexing rule, e.g.,
\[\tilde P^{\pi}(i^{\prime}, j^{\prime}|i,j)= \tilde P^{\pi}((i-1)m_a+j,(i^{\prime}-1)m_a+j^{\prime}).\]


\subsection{Limit Theorems under Sufficient Exploration} \label{sec:stand}
We first establish the asymptotic Normality of $\hat Q_n$ under an exploration policy $\pi$:
\begin{theorem}\label{thm:clt_basic}
	Under Assumptions \ref{assum:ind} and \ref{assum:unique}, if the data are collected according to $\pi$ satisfying Assumption \ref{assum:postive_stat}, 
	$\hat{Q}_n  \rightarrow Q$ almost surely (a.s.) as $n\to\infty$. Moreover,
	\[ \sqrt{n}(\hat{Q}_n - Q) \Rightarrow \mathcal{N}({\bf 0},\Sigma) \mbox{\ \ as\ \  $n\to\infty$},\]
	where 
	\begin{equation}
	\Sigma = (I-\gamma \tilde{P}^{\pi^{*}})^{-1}W^{-1}(D_R + \gamma^2D_Q) ((I-\gamma \tilde{P}^{\pi^{*}})^{-1})^T,\label{formula var}
	\end{equation}
	$W$, $D_R$ and $D_Q$ are $N\times N$ diagonal matrices with
	\begin{equation} \label{eq:prim1}
	\begin{split}
	&W((i-1)m_a+j,(i-1)m_a+j)= w(i,j),\\  
	& D_R((i-1)m_a+j,(i-1)m_a+j)=\sigma^2_{R}(i,j),\\
	& D_Q((i-1)m_a+j,(i-1)m_a+j)=(V^{*})^T\Sigma_{P_{i,j}}V^{*}, 
	\end{split}
	\end{equation}
	where $\Sigma_{P_{i,j}}$ is defined in \eqref{eq:Sigma_P}.
\end{theorem}

In addition to the asymptotic Normality, a key result in Theorem \ref{thm:clt_basic} is an explicit characterization of the asymptotic variance $\Sigma$, which is derived using the delta method \citep{serfling2009approximation}. Intuitively, it is the product of the sensitivities (i.e., gradient) of $Q$ with respect to its parameters and the variances of the parameter estimates. Here, the parameters are $\mu_R$ and $P$ and the corresponding gradients are $(I-\gamma \tilde{P}^{\pi^{*}})^{-1}$ and $(I-\gamma \tilde{P}^{\pi^{*}})^{-1}V^{*}$ respectively. The variances of these parameter estimates (i.e. \eqref{mu estimate} and \eqref{P estimate}) involve $\sigma^2_{R}(i,j)$, $\Sigma_{P_{i,j}}$, and $w(i,j)$ (i.e., the proportion of samples to each state-action pair). 


Using the relations that $V^{*}(s) = \max_{a\in\mathcal{A}} Q(s,a)$ and  $\hat{V}^{*}_n(s) = \max_{a\in\mathcal{A}} \hat{Q}_n(s,a)$, we can leverage Theorem \ref{thm:clt_basic} to further establish the asymptotic Normality of $\hat{V}^{*}_n$ and $\hat{\chi}^{*}_n$:

\begin{corollary}\label{cor:V_R_basic}
	Under Assumptions \ref{assum:ind}, \ref{assum:unique} and \ref{assum:postive_stat},
	\[ \sqrt{n}(\hat{V}^{*}_n -V^{*}) \Rightarrow \mathcal{N}(0,\Sigma_V) \mbox{\ \  and\ \ }
	\sqrt{n}(\hat{\chi}^{*}_n - \chi^{*}) \Rightarrow \mathcal{N}(0,\sigma_{\chi}^2) \mbox{\ \ as\ \  $n\to\infty$}\]
	where \[\Sigma_V=(I-\gamma P^{\pi^{*}})^{-1}(W^{\pi^{*}})^{-1}[D_R^{\pi^{*}} +  \gamma^2D_{V}^{\pi^{*}}]((I-\gamma P^{\pi^{*}})^{-1})^T,\] 
	and
	\[\sigma_{\chi}^2=\rho ^T \Sigma_V \rho,\]
	where $P^{\pi^{*}}$ is an $m_s\times m_s$ transition matrix with $P^{\pi^{*}}(i,j)=P(j|i,a^*(i))$,
	$W^{\pi^*}$, $D_R^{\pi^{*}}$ and $D_{V}^{\pi^{*}}$ are $m_s\times m_s$ diagonal matrices with
	\[W^{\pi^*}(i,i)=w(i,a^*(i)), ~ D_R^{\pi^{*}}(i,i)=\sigma_R^2(i,a^*(i)), ~ \mbox{and} ~ D_{V}^{\pi^{*}}(i,i)=(V^*)^T \Sigma_{P_{i,a^*(i)}}V^*\] 
	respectively.
	
	
	
\end{corollary}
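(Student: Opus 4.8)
The plan is to exploit the local smoothness that Assumption \ref{assum:unique} confers on the otherwise non-smooth $\max$ operator, reducing the problem to a linear read-off of the coordinates of $\hat Q_n$ and then invoking Theorem \ref{thm:clt_basic}. Writing $\hat Q_n$ as an $N$-dimensional vector under the indexing rule above, let $\mathcal{O}=\{(i-1)m_a+a^*(i):i\in\mathcal{S}\}$ be the set of $m_s$ indices corresponding to the optimal actions, and define the $m_s\times N$ selection matrix $L$ whose $i$-th row is $e_{(i-1)m_a+a^*(i)}^T$, so that $V^*=LQ$ by definition of $a^*$.

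First I would show that $\hat V_n^*=L\hat Q_n$ for all large $n$, almost surely. Under Assumption \ref{assum:unique} there is a strict gap $\delta=\min_{s}\big(Q(s,a^*(s))-\max_{a\neq a^*(s)}Q(s,a)\big)>0$. Since $\hat Q_n\to Q$ almost surely by Theorem \ref{thm:clt_basic}, once $\|\hat Q_n-Q\|_\infty<\delta/2$ we have $\arg\max_a \hat Q_n(s,a)=a^*(s)$ for every $s$, hence $\hat V_n^*(s)=\hat Q_n(s,a^*(s))=(L\hat Q_n)(s)$. Consequently $\sqrt n(\hat V_n^*-V^*)$ and $L\sqrt n(\hat Q_n-Q)$ coincide on an event whose probability tends to one, so they share the same weak limit; by Theorem \ref{thm:clt_basic} and the continuous (linear) mapping theorem, $\sqrt n(\hat V_n^*-V^*)\Rightarrow \mathcal{N}(0,L\Sigma L^T)$.

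It then remains to verify the algebraic identity $L\Sigma L^T=\Sigma_V$. The key observation is that under the deterministic optimal policy $\tilde P^{\pi^*}(s',a'|s,a)=P(s'|s,a)\mathds{1}(a'=a^*(s'))$ is supported only on columns indexed by $\mathcal{O}$. Ordering the $N$ indices so that those in $\mathcal{O}$ come first, $\tilde P^{\pi^*}$ becomes block lower-triangular, $\tilde P^{\pi^*}=\left(\begin{smallmatrix}P^{\pi^*}&0\\B&0\end{smallmatrix}\right)$, with top-left block exactly $P^{\pi^*}$. Hence $(I-\gamma\tilde P^{\pi^*})^{-1}$ is block lower-triangular with top-left block $(I-\gamma P^{\pi^*})^{-1}$, and since $L$ selects precisely the first block of rows, $L(I-\gamma\tilde P^{\pi^*})^{-1}=\big((I-\gamma P^{\pi^*})^{-1}\ \ 0\big)$. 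Because $W^{-1}(D_R+D_Q)$ is diagonal, multiplying out annihilates the off-diagonal contributions and leaves $(I-\gamma P^{\pi^*})^{-1}(W^{\pi^*})^{-1}(D_R^{\pi^*}+D_V^{\pi^*})((I-\gamma P^{\pi^*})^{-1})^T=\Sigma_V$, using that the restriction of the diagonal matrix $W^{-1}(D_R+D_Q)$ to $\mathcal{O}$ is exactly $(W^{\pi^*})^{-1}(D_R^{\pi^*}+D_V^{\pi^*})$.

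Finally, for $\hat\chi_n^*$ I would apply the linear functional $\rho^T$: since $\chi^*=\rho^TV^*$ and $\hat\chi_n^*=\rho^T\hat V_n^*$, the continuous mapping theorem gives $\sqrt n(\hat\chi_n^*-\chi^*)=\rho^T\sqrt n(\hat V_n^*-V^*)\Rightarrow\mathcal{N}(0,\rho^T\Sigma_V\rho)$, i.e. $\sigma_\chi^2=\rho^T\Sigma_V\rho$. I expect the main obstacle to be justifying that the $\max$ may be replaced by the linear selection $L$; everything downstream is routine linear algebra. The delicate point is that $\hat V_n^*=L\hat Q_n$ holds only eventually rather than pathwise for every $n$, so one must argue via the vanishing-probability event (not an exact identity) that the weak limits agree, which is precisely where strong consistency and the strict-argmax Assumption \ref{assum:unique} are used.
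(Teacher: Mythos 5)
Your proposal is correct and follows essentially the same route as the paper: the paper applies the delta method to the map $g_V(Q)=(\max_a Q(1,a),\dots,\max_a Q(m_s,a))$, whose Jacobian at $Q$ under Assumption \ref{assum:unique} is exactly your selection matrix $L$, and then performs the identical block re-indexing of $\tilde P^{\pi^*}$ to show $L(I-\gamma\tilde P^{\pi^*})^{-1}=[(I-\gamma P^{\pi^*})^{-1},0]$ and collapse $L\Sigma L^T$ to $\Sigma_V$, finishing with the continuous mapping theorem for $\chi^*$. Your only deviation is cosmetic: you unfold the delta method into an explicit eventually-exact-linearity argument on a probability-one event, which is a valid (and slightly more elementary) way of exploiting the same local linearity of the max.
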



Theorem \ref{thm:clt_basic} and Corollary \ref{cor:V_R_basic} can be used immediately for statistical inference. 
In particular, we can construct confidence regions for subsets of the Q-values jointly,
or confidence intervals for linear combinations of the Q-values.
For example, a quantity of interest that we will later utilize to design good exploration policies is $Q(s,a_1)-Q(s, a_2)$, i.e., the difference between action $a_1$ and $a_2$ when the agent is in state $s$.
Define $\sigma_{\Delta Q}^2$ as
\begin{equation} \label{eq:sigmaD}
\sigma_{\Delta Q}^2(s,a_1,a_2)=\left(e_{(s-1)m_a+a_1}-e_{(s-1)m_a+a_2}\right)^T \Sigma \left(e_{(s-1)m_a+a_1}-e_{(s-1)m_a+a_2}\right)
\end{equation}
and its estimator $\hat\sigma_{\Delta Q,n}^2$ by replacing $Q$, $V^{*}$, $\sigma^2_{R}$, $w$, $P$ with
$\hat{Q}_n$,$\hat{V}^{*}_n$, $\hat\sigma^2_{R,n}$, $\hat w_n$, $\hat{P}_n$ in $\Sigma$.
Then, the $100(1-\alpha)\%$ confidence interval for $Q(s,a_1)-Q(s, a_2)$ takes the form
\[C_n(s,a_1,a_2; \alpha)=\left(\hat Q_n(s,a_1)-\hat Q_n(s, a_2)\right) \pm z_{\alpha/2} \hat\sigma_{\Delta Q,n}(s,a_1,a_2),\]
where $z_{\alpha/2}$ is the $(1-\alpha/2)$-quantile of $\mathcal{N}(0,1)$, i.e., $\mathbb{P}(\mathcal{N}(0,1)\leq z_{\alpha/2})=1-\alpha/2$.
This confidence interval is asymptotically valid in the sense that \[\lim_{n\rightarrow\infty} \mathbb{P}(Q(s,a_1)-Q(s,a_2)\in C_n(s,a_1,a_2;\alpha))=1-\alpha.\]

Following a similar approach as the proof of Theorem \ref{thm:clt_basic}, we can also establish the asymptotic Normality for the estimated value function under any given policy $\tilde{\pi}$. In this case, the value function $V^{\tilde\pi}$ satisfies the equation 
\[V^{\tilde{\pi}}(s) = \sum_a \mu_R(s,a) \tilde{\pi}(a|s) + \gamma \sum_a \tilde{\pi}(a|s) \sum_{s'} P(s'|s,a) V^{\tilde{\pi}}(s').\] 
Denote the estimator of 
$ V^{\tilde{\pi}}$ as $ \hat{V}_n^{\tilde{\pi}}$, which satisfies
\[\hat V_n^{\tilde{\pi}}(s) = \sum_a \hat \mu_{R,n}(s,a) \tilde{\pi}(a|s) + \gamma \sum_a \tilde{\pi}(a|s) \sum_{s'} \hat P_n(s'|s,a) \hat V_n^{\tilde{\pi}}(s').\]

\begin{corollary}\label{cor:V_basic_pi}
	Under Assumptions \ref{assum:ind} and \ref{assum:postive_stat}, 
	\[ \sqrt{n}(\hat{V}^{\tilde{\pi}}_n -V^{\tilde{\pi}}) \Rightarrow \mathcal{N}(0,\Sigma^{\tilde{\pi}}_V) \mbox{\ \ as\ \  $n\to\infty$}\]
	where \[\Sigma^{\tilde{\pi}}_V= (I-\gamma P^{\tilde{\pi}})^{-1}  D^{\tilde{\pi}} \left((I-\gamma P^{\tilde{\pi}})^{-1} \right)^T,\] 
	$P^{\tilde{\pi}}$ is an $m_s\times m_s$ transition matrix with $P^{\tilde{\pi}}(i,j)=\sum_{1\leq a\leq m_a}P(j|i,a) \tilde{\pi}(a|i)$,
	$D^{\tilde{\pi}}$ is an $m_s\times m_s$ diagonal matrix with
	$$D^{\tilde{\pi}}(i,i)= \sum_{1\leq j\leq m_a} \frac{\tilde{\pi}(j|i)^2}{w(i,j)} \left[(\gamma V^{\tilde{\pi}})^T \Sigma_{P_{i,j}}(\gamma V^{\tilde{\pi}}) + \sigma^2_R(i,j) \right].$$
\end{corollary}

 Corollary \ref{cor:V_basic_pi} essentially recovers 
 Corollary 4.1  in  \citet{mannor2007bias}. Different from  \citet{mannor2007bias}, we derive our results by using an implicit function theorem on the corresponding fixed-point equation to obtain the gradient of $Q$, viewing the latter as the solution to the equation and as a function of $\mu_R, P$. This approach is able to generalize the results for a fixed policy in \citet{mannor2007bias} to the optimal value functions, and  provide distributional statements as Theorem \ref{thm:clt_basic} and Corollary \ref{cor:V_R_basic} above. We also note that an alternative route to obtain our results is to conduct perturbation analysis on the linear programming (LP) representation of the MDP, which would also give gradient information of $V^*$ (and hence $Q$ as well). The implicit function approach, nonetheless, is more elementary and intuitive.



\subsection{Non-Unique Optimal Policy} \label{sec:non-unique}
Theorem \ref{thm:clt_basic} above is developed assuming the optimal solution for the MDP is unique. A natural question to ask is what would happen if Assumption \ref{assum:unique} does not hold.
When the optimal solution for the MDP is not unique, the estimated $\hat Q_n$ and $\hat V_n^*$ may ``jump'' around different optimal actions, leading to a more complicated large-sample behavior. We elaborate on this next. Let 
\[U= \{u\in \mathbb{R}^{m_s N+ N}:||u||=1 \}.\]
\begin{theorem} \label{thm:non-unique}
	Suppose Assumptions \ref{assum:ind} and \ref{assum:postive_stat} hold, but the optimal policy is not unique. Then there exists $K\geq1$ distinct $m_s \times (Nm_s+N)$  matrices  $\{G_k\}_{1\leq k \leq K}$ and a deterministic partition of $U$, $\{U_k\}_{1\leq k \leq K}$, i.e., $U= \cup_{1\leq k \leq K}U_k$, such that
\[\sqrt{n}(\hat{V}_n^* - V^{*}) \Rightarrow \sum^K_{k=1} G_k  \mathds{1}\left( Z/\|Z\| \in U_k \right) Z \mbox{\ \ as\ \  $n\to\infty$},\] 
where $Z \sim \mathcal{N}({\bf 0}, \Sigma_{R,P})$,  
\begin{equation} \label{eq:Dp}
	\Sigma_{R,P} = \begin{pmatrix} W^{-1}D_R  & {\bf 0}  \\
	{\bf 0} &  D_P\\
	\end{pmatrix}, 
	~~~ 
	D_P = \begin{pmatrix} \frac{\Sigma_{P_{1,1}}}{w(0m_a+1)}& & & &  \\
	& \ddots & & &\\
	& &  \frac{\Sigma_{P_{i,j}}}{w((i-1)m_a+j)} & &\\
	& & &\ddots  &\\
	&&&& \frac{\Sigma_{P_{m_s,m_a}}}{w((m_s-1)m_a+m_a)}\\
	\end{pmatrix},
\end{equation}
and $W$ and $D_R$ are defined in \eqref{eq:prim1}.
\end{theorem}

To see Theorem \ref{thm:non-unique}, note that the non-uniqueness of the optimal solution can be formalized as a non-degeneracy in the LP representation of the MDP. In particular, when the optimal policy is not unique, the LP representation of the optimal value function has multiple optimal bases. When perturbing $[P,\mu_R]$ along a specific direction $u\in U$ for a small enough amount, at least one of the bases remains optimal. Let $B_u$ denote an optimal basis under the perturbation $u$ and let $\tilde G_{B_u}$ denote the derivative of the optimal solution to the perturbed LP with respect to $[P,\mu_R]$. Then, the corresponding directional derivative takes the form $u^T\tilde G_{B_u}$. Based on the above discussion, $K$ in Theorem \ref{thm:non-unique} is the number of distinct derivatives among the optimal bases, $\{G_k\}_{1\leq k \leq K}$ denotes these derivatives, and $\{U_k\}_{1\leq k \leq K}$ denotes  the corresponding partition of $U$. In particular, for $u\in U_k$, the directional derivative of the optimal value takes the form $u^TG_k$. See Appendix \ref{app:thm2} for more details.

We note from Theorem \ref{thm:non-unique} that if $K=1$, we recover the case of Corollary \ref{cor:V_R_basic}.
However, if $K>1$, the limit distribution becomes non-Gaussian. This arises because the sensitivity to $P$ or $\mu_R$ can be very different depending on the perturbation direction, which is a consequence of non-uniqueness of the solution. 
Lastly, we comment that Theorem \ref{thm:non-unique} can be viewed as an MDP analog to the non-Gaussian limit in sample average approximation where an expected-value optimization may have multiple optimal solutions \citep{shapiro2014lectures}.

\subsection{Constrained Problems}
The MDP introduced in Section \ref{sec:problem} does not have any constraint. In recent years, motivated by budgeted decision-making \citep{boutilier2016budget} and safety-critical applications \citep{achiam2017constrained,chow2017risk}, there are growing interests 
in constrained MDPs.
The goal is to maximize the long-run accumulated discounted reward, $V^{\pi}(s)$,
while ensure that a long-run accumulated discounted cost, denoted as $L^{\pi}(s) =\mathbb{E}[ \sum^{\infty}_{t=0} \gamma^t C(s_t, \pi(s_t)) |s_0 =s]$, is less than some given budget $\eta$, i.e.,
\begin{equation}
\max_{\pi}\sum_s\rho(s)V^{\pi}(s)\text{\ \ subject to\ \ }\sum_s\rho(s)L^\pi(s)\leq\eta\label{CMDP}
\end{equation}
We refer to $L^{\pi}(s)$ as the loss function.
Assume data is generated as before. In addition, we have observations on the incurred cost $\{c_t(s_t,a_t): 1\leq t\leq n\}$. 
Define the sample mean and sample variance of the random costs as 

\begin{equation*}
\hat{\mu}_{C,n}(s=i, a=j)= \begin{cases} \frac{\sum_{1\leq t \leq n} c_t(s_t, a_t)\mathds{1}(s_t=i, a_t=j)}{\sum_{1\leq t\leq n} \mathds{1}(s_t=i, a_t=j)}, \quad \text {if $\sum_{1\leq t\leq n} \mathds{1}(s_t=i, a_t=j)>0$ }\\ 
u_C^0(i,j) \quad \text {if $\sum_{1\leq t\leq n} \mathds{1}(s_t=i, a_t=j)=0$; }\\
\end{cases} 
\end{equation*}
\begin{equation*}
\hat{\sigma}^2_{C,n}(s=i,a=j)= \begin{cases}
\frac{\sum_{1\leq t \leq n} c_t(s_t, a_t)^2\mathds{1}(s_t=i, a_t=j)}{\sum_{1\leq t\leq n} \mathds{1}(s_t=i, a_t=j)}-\hat{\mu}_{C,n}(i, j)^2  \quad \text {if $\sum_{1\leq t\leq n} \mathds{1}(s_t=i, a_t=j)>0$ } \\
s_C^0(i,j) \quad \text {if $\sum_{1\leq t\leq n} \mathds{1}(s_t=i, a_t=j)=0$, }
\end{cases} 
\end{equation*}
where $u_C^0(i,j)\in \mathcal{R}$ and $s_C^0(i,j)\geq 0$ are some appropriate initial values for the mean and variance if no data is available.

We  follow our paradigm to solve the empirical counterpart of the problem, namely to find 
a policy $\hat\pi_n^*$ that solves (\ref{CMDP}) by using $\hat V_n^{\pi}(s)$ and $\hat L^\pi_n(s)$ instead of  $V^{\pi}(s)$ and $L^{\pi}(s)$,  where $\hat V_n^{\pi}(s)$ and $\hat L^\pi_n(s)$ are the value functions and loss functions estimates using the empirical $\hat\mu_{R,n}$, $\hat\mu_{C,n}$, $\hat P_n$. We focus on the estimation error of the optimal values (instead of the feasibility, which could also be important but not pursued here).

To understand the error, we first utilize an optimality characterization of constrained MDPs. In general, an optimal policy for \eqref{CMDP}  
is a ``split'' policy \citep{feinberg2012splitting}, namely, a policy that is deterministic except at one state where a randomization between two different actions is allowed. This characterization can be deduced from the associated LP using the occupancy measure \citep{altman1999constrained}. We refer to the randomization probability as the mixing parameter and denote the optimal mixing parameter as $\alpha^*$. In particular, whenever this state, say $s_r$, is visited, action $a_1^*(s_r)$ is chosen with probability $\alpha^*$  and action $a_2^*(s_r)$ is chosen with probability $1-\alpha^*$. 



\begin{theorem}\label{thm CMDP}
	Suppose Assumptions \ref{assum:ind} and \ref{assum:postive_stat} hold and there is a unique optimal policy. Moreover, assume that there is no deterministic policy $\pi$ that satisfies $\sum_s\rho(s)L^\pi(s)=\eta$. Then
	\[\sqrt n(\hat V_n^*-V^*)\Rightarrow N(0,\Sigma_c) \mbox{ as $n\to \infty$},\] 
	where one of the following two cases hold:
	
	\noindent{\bf 1.} The optimal policy is deterministic. In this case,
	$\Sigma_c=\Sigma_V$ where $\Sigma_V$ is defined in Corollary \ref{cor:V_R_basic}.
	
	\noindent{\bf 2.} The optimal policy is deterministic, except at one state, $s_r$, where a randomization between two actions, $a^{*}_1(s_r)$ and $a^{*}_2(s_r)$, occurs, with the mixing parameter $\alpha^*$. 
	In this case,
	\begin{align*}
	&\Sigma_c=\left((I-\gamma P^{\pi^{*}})^{-1}[G^{\pi^{*}}, 0,H^{\pi^{*}}_V]-\frac{(I-\gamma P^{\pi^{*}})^{-1} h_V \rho^T (I-\gamma P^{\pi^{*}})^{-1}[0,G^{\pi^{*}},H^{\pi^{*}}_L]}{\rho^T (I-\gamma P^{\pi^{*}})^{-1} h_L } \right) \\
	&\quad \times \Sigma_{R,C,P} \left((I-\gamma P^{\pi^{*}})^{-1}[G^{\pi^{*}}, 0,H^{\pi^{*}}_V]-\frac{(I-\gamma P^{\pi^{*}})^{-1} h_V \rho^T (I-\gamma P^{\pi^{*}})^{-1}[0,G^{\pi^{*}},H^{\pi^{*}}_L]}{\rho^T (I-\gamma P^{\pi^{*}})^{-1} h_L } \right) ^T,
	\end{align*}
	where 
	\begin{align*}
	\Sigma_{R,C,P} = \begin{pmatrix} W^{-1}D_R  & {\bf 0} & {\bf 0}   \\
	{\bf 0} & W^{-1}D_C & {\bf 0}\\
	{\bf 0} & {\bf 0} &  D_P\\
	\end{pmatrix}, 
	\end{align*}
	$D_C$ is an $N\times N$ diagonal matrix with $D_C((i-1)m_a+j,(i-1)m_a+j)=\sigma^2_{C}(i,j)$, 
	$W$ and $D_R$ are defined in \eqref{eq:prim1}, and $D_P$ is defined in \eqref{eq:Dp}.
	$h_V$ and $h_L$ are $m_s$-dimensional vectors. When $s=s_r$
	\[h_V(s) = (\mu_R(s, a_1^{*}(s)) - \mu_R(s, a_2 ^{*}(s)) + \sum^{m_s}_{j=1} \gamma V^{\pi^{*}}(j)(P(j|s,a_1^{*}(s))-P(j|s,a_2^{*}(s)) ),\] 
	\[h_L(s) = (\mu_C(s, a_1^{*}(s)) - \mu_C(s, a_2 ^{*}(s)) + \sum^{m_s}_{j=1} \gamma L^{\pi^{*}}(j)(P(j|s,a_1^{*}(s))-P(j|s,a_2^{*}(s)) ).\]
	when $s\neq s_r$, $q_V(s)=q_L(s)=0$.
	\begin{align*}
	G^{\pi{*}} = \begin{pmatrix} \pi^{*}(\cdot|1)^T  & &  \\
	& \ddots & \\
	& &  \pi^{*}(\cdot|m_s)^T \\
	\end{pmatrix} ,
	H^{\pi{*}}_V = \begin{pmatrix} q_V^{\pi^{*}}(1)^T & &  \\
	& \ddots & \\
	& & q_V^{\pi^{*}}(m_s)^T \\
	\end{pmatrix},
	H^{\pi{*}}_L = \begin{pmatrix} q_L^{\pi^{*}}(1)^T & &  \\
	& \ddots & \\
	& & q_L^{\pi^{*}}(m_s)^T \\
	\end{pmatrix},
	\end{align*}
	where $q_V^{\pi^{*}}(i)$ and $q_L^{\pi^{*}}(i)$ are $N$-dimensional row vectors:
	\[q_V^{\pi^{*}}(i) = \gamma \left[\pi^{*}(1|i)(V^{\pi^{*}})^T,\dots, \pi^{*}(j|i)(V^{\pi^{*}})^T,\dots, \pi^{*}(m_a|i)(V^{\pi^{*}})^T\right]^T,\]
	\[q_L^{\pi^{*}}(i) = \gamma \left[\pi^{*}(1|i)(L^{\pi^{*}})^T,\dots, \pi^{*}(j|i)(L^{\pi^{*}})^T,\dots, \pi^{*}(m_a|i)(L^{\pi^{*}})^T\right]^T.\] 
\end{theorem}

Case 1 in Theorem \ref{thm CMDP}  corresponds to the case where the constraint in \eqref{CMDP} is not binding. This effectively reduces the constrained MDP to the unconstrained scenario in Corollary \ref{cor:V_R_basic}, since a small perturbation of $\mu_R,\mu_C,P$ does not affect feasibility. Case 2 is when the constraint is binding. In this case, $\alpha^*$ must be chosen such that the split policy ensures equality in the constraint. When $\mu_R,\mu_C,P$ are perturbed, the estimated $\hat\alpha_n^*$ would adjust accordingly. In this case, $\hat V_n^*$ incurs two sources of noises, one from the uncertainty in $\hat\mu_{R,n},\hat P_n$ that appears also in unconstrained problems, and the other from the uncertainty in calibrating $\hat\alpha_n^*$ that is affected by $\hat\mu_{C,n},\hat P_n$. This latter source of noise leads to the extra terms in the asymptotic variance expression.

We assume in Theorem \ref{thm CMDP} that there is no deterministic policy $\pi$ satisfying $\sum_s\rho(s)L^\pi(s)=\eta$. This requirement is imposed to ensure that the corresponding LP formulation of the constrained MDP is non-degenerate. Similar to Assumption \ref{assum:unique}, when it does not hold, the limiting distribution can be non-Gaussian (see, e.g., Theorem \ref{thm:non-unique}).

\section{Exploration Policy}\label{sec:explore}
In this section, we utilize our results in Section \ref{sec:ucq} to design good exploration policies. We focus on the setting in which an agent is assigned a period to collect data by running the state transition with an exploration policy. The goal is to obtain the best policy at the end of the period in a probabilistic sense, i.e., minimize the probability of selecting a suboptimal policy. We restrict our development to the standard case where each state has a unique optimal action as discussed in Section \ref{sec:stand}.

We first define, for $i\in\mathcal{S}$, $j\in \mathcal{A}$ and $j\neq a^*(i)$, the relative discrepancy as
\begin{equation}\label{eq:hij}
h_{ij}=\left(Q(i,a^*(i))-Q(i,j)\right)^2/\sigma_{\Delta Q}^2(i,a^*(i),j),
\end{equation}
where $\sigma^2_{\Delta Q}(i,a^*(i),j)$ is defined in \eqref{eq:sigmaD}. 
Intuitively, $h_{ij}$ captures the relative ``difficulty'' in obtaining the optimal policy given the estimation errors of Q-values. 
In particular, if the Q-values are far apart, or if the estimation variance is small, then $h_{ij}$ is large which signifies an ``easy'' problem, and vice versa. 

Our proposed strategy attempts to maximize the worst of $h_{ij}$'s, i.e.,
\begin{equation}\label{eq:opt2}
\max_{w\in\mathcal{W}} \min_{i\in \mathcal{S}}\min_{j\in \mathcal{A},j\neq a^*(i)}h_{ij},
\end{equation}
where $w$ denotes the proportion of visits to each state-action pair, within some allocation set $\mathcal{W}$ which will be defined in \eqref{eq:W} below. 
Based on our interpretation of $h_{ij}$'s,
criterion \eqref{eq:opt2} aims to make the problem the ``easiest'' to differentiate the optimal policy. Alternatively, one can also interpret \eqref{eq:opt2} from a large deviation point of view \citep{glynn2004large,dong2016three}. Suppose the Q-values for state $i$ between two different actions $a^*(i)$ and $j$ are very close. Then, one can show that the probability of a suboptimal selection of $j$ has roughly an exponential in $n$ decay rate controlled by $h_{ij}$. Obviously, there can be many more comparisons to consider, but the exponential form dictates that the smallest decay rate dominates the calculation, thus leading to the inner minimizations in \eqref{eq:opt2}. Criterion like \eqref{eq:opt2} is motivated by the OCBA procedure in simulation optimization, which in general considers simple mean-value alternatives \citep{chen2011stochastic}. Here, we consider the estimation of Q-values. Thus, we refer to our procedure as Q-OCBA.

Implementing criterion \eqref{eq:opt2} requires addressing two additional considerations. First, solving \eqref{eq:opt2} needs the model primitives $Q$, $P$, and $\sigma^2_R$, which appear in the expression of $h_{ij}$. These quantities are unknown a priori, but as we collect more data, they can be sequentially estimated. This leads to a multi-stage parameter update plus optimization scheme. Second, since data are collected through running a Markov chain, not all allocation $w$ is \emph{admissible}, i.e., realizable as the stationary distribution of some Markov chain. To resolve this issue, we next derive a convenient characterization for admissibility. 

Accordingly to Assumption \ref{assum:postive_stat}, we call a policy $\pi(\cdot|s)$ admissible if the Markov Chain with transition probability $\tilde{P}^{\pi}$ is irreducible, and we denote $w_{\pi}$ as its stationary distribution. Define 
\begin{equation}\label{eq:W}
\begin{split}
\mathcal{W} = &\left\{w > 0: \sum_{1\leq j \leq m_a} w((i-1)m_a + j) =
\sum_{1\leq k \leq m_s} \sum_{1\leq l \leq m_a} w((k-1)m_a+l) P(i|k,l)\right.\\
&\left. \quad \forall 1 \leq i \leq m_s,
\sum_{1\leq i \leq m_s} \sum_{1\leq j \leq m_a} w((i-1)m_a+j) = 1 \right\}.
\end{split}\end{equation} 
The following lemma provides a characterization for the set of admissible policies.
\begin{lemma} \label{lemma:feasible set}
	For any admission policy $\pi$, $w_{\pi}\in \mathcal{W}$. For any $w\in \mathcal{W}$, $\pi_w$ with $\pi_w(a=j|s=i)  = w((i-1)m_a + j)/ \left(\sum^{m_a}_{k=1} w((i-1)m_a + k)\right)$ is an admissible policy.
\end{lemma}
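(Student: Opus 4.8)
The plan is to prove the two assertions separately, the crux being a direct equivalence between the stationarity equations of $\tilde P^{\pi}$ and the flow-balance constraints that define $\mathcal{W}$. For the first assertion, let $\pi$ be admissible with stationary distribution $w_{\pi}$, so that $w_{\pi}$ is a positive probability vector satisfying $w_{\pi}(s',a') = \sum_{s,a} w_{\pi}(s,a)\,\tilde P^{\pi}(s',a'|s,a)$ for every $(s',a')$. I would substitute $\tilde P^{\pi}(s',a'|s,a)=P(s'|s,a)\pi(a'|s')$ and sum this identity over $a'\in\mathcal{A}$; since $\sum_{a'}\pi(a'|s')=1$, the policy factor drops out and I recover exactly the balance equation $\sum_{a'} w_{\pi}(s',a') = \sum_{s,a} w_{\pi}(s,a)P(s'|s,a)$ in the definition of $\mathcal{W}$. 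Normalization holds because $w_{\pi}$ is a probability vector, and strict positivity $w_{\pi}>0$ follows from positive recurrence of the whole chain. Hence $w_{\pi}\in\mathcal{W}$.

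For the second assertion, fix $w\in\mathcal{W}$. The map $\pi_w$ is well defined because each denominator $\sum_{k} w((i-1)m_a+k)$ is strictly positive (as $w>0$), and by construction $\sum_{j}\pi_w(j|i)=1$, so $\pi_w$ is a genuine policy. I would then verify directly that $w$ is stationary for $\tilde P^{\pi_w}$: for any $(s',a')$,
\[
\sum_{s,a} w(s,a)\,\tilde P^{\pi_w}(s',a'|s,a) = \pi_w(a'|s')\sum_{s,a} w(s,a)P(s'|s,a) = \pi_w(a'|s')\sum_{a''} w(s',a'') = w(s',a'),
\]
where the middle equality is precisely the flow-balance constraint of $\mathcal{W}$ and the last uses the definition of $\pi_w$. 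This shows $w$ is a strictly positive stationary probability distribution of the finite chain $\tilde P^{\pi_w}$.

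It remains to deduce that $\tilde P^{\pi_w}$ is positive recurrent, which is the step I expect to require the most care. Here I would invoke the standard finite-chain fact that every stationary distribution assigns zero mass to transient states; since $w(s,a)>0$ for all $(s,a)$, no state-action pair can be transient, so every pair is recurrent and, the chain being finite, positive recurrent. To obtain uniqueness of the stationary distribution (so that $w_{\pi_w}$ is well defined and equals $w$), I would use that $\pi_w$ is fully supported, i.e.\ $\pi_w(a'|s')>0$ for all $s',a'$: this makes the communication structure of $\tilde P^{\pi_w}$ coincide with reachability in the MDP under $P$, so the reachability condition noted after Assumption \ref{assum:postive_stat} renders the chain irreducible and $w$ its unique stationary distribution. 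Therefore $\pi_w$ is admissible with $w_{\pi_w}=w$, completing the correspondence between $\mathcal{W}$ and admissible stationary distributions.

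The main obstacle is this final passage from the existence of a strictly positive stationary vector to genuine positive recurrence (and uniqueness), rather than the algebra, which is a short summation and substitution; the delicate point is ruling out a decomposition of $\tilde P^{\pi_w}$ into several recurrent classes, which I resolve through the full support of $\pi_w$ together with the reachability assumption on the MDP.
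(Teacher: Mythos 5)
Your proof is correct and follows essentially the same route as the paper: both directions rest on the same algebra, namely summing the stationary equation of $\tilde P^{\pi}$ over actions to recover the flow-balance constraints defining $\mathcal{W}$, and substituting the flow-balance constraint back in to verify that $w$ is stationary for $\tilde P^{\pi_w}$. If anything you are more careful than the paper, whose proof stops at ``$w$ is the stationary distribution of $\tilde{P}^{\pi_w}$'': your extra step --- strict positivity of $w$ rules out transient states, so the finite chain is positive recurrent --- is precisely what closes the gap to admissibility as defined, and your remark that uniqueness of the stationary distribution (hence $w_{\pi_w}=w$) additionally needs the reachability condition noted after Assumption \ref{assum:postive_stat} flags a point the paper leaves implicit.
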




Lemma \ref{lemma:feasible set} implies that optimizing over the set of admissible policies is equivalent to optimizing over the set of stationary distributions. The latter is much more tractable thanks to the linear structure of $\mathcal W$. In practice, we use $\mathcal{W}_\eta = \mathcal{W} \cap \{w(i) \geq \eta, i=1,\dots,N\}$ for some small $\eta$ to ensure closedness of the set (In our numerical experiments, we use $\eta=10^{-6}$). 

To elaborate the criterion \eqref{eq:opt2}, we note that it can be equivalently written as $\min_{w\in\mathcal{W}} \max_{i\in \mathcal{S}}\max_{j\in \mathcal{A},j\neq a^*(i)}1/h_{ij}$. Then, plugging the formula for $\sigma_{\Delta Q}^2(i,a^*(i),j)$ (i.e., \eqref{eq:sigmaD}) in $h_{ij}$, we have 
\begin{equation}\label{eq:opt3}
\min_{w\in\mathcal{W}} \max_{i\in \mathcal{S}}\max_{j\in \mathcal{A},j\neq a^*(i)} \sum_{(s,a)\in\mathcal{S}\times \mathcal{A}} \frac{c_{ij}(s,a)}{w_{s,a}},
\end{equation} 
where 
\begin{equation}\label{eq:cij}
c_{ij}(s,a)=\frac{(H_{ij}((s-1)m_a+a))^2\left(\sigma_R^2(s,a)+(V^*)^T\Sigma_{P_{s,a}}V^*\right)}{\left(Q(i,a^*(i))-Q(i,j)\right)^2}
\end{equation}
and
\[
H_{ij}=\left(e_{(i-1)m_a+a^*(i)}-e_{(i-1)m_a+j}\right)^T(I-\gamma \tilde{P}^{\pi^{*}})^{-1}.
\]
Note that $c_{ij}(s,a)$ can be easily estimated with plug-in estimators. 
Next, note that the objective function in \eqref{eq:opt3}
is convex with $N$ variables and the feasible region $\mathcal{W}_{\eta}$ can be characterized by $m_s+1+N$ linear constraints. Thus, \eqref{eq:opt3} can be solved by standard convex optimization solvers efficiently (in our numerical experiments in Section \ref{sec:numerics}, solving it takes less than 0.01 seconds).

We are now ready to introduce our Q-OCBA implementation -- Algorithm \ref{alg:qocba}.





\begin{algorithm}[h]
	\textbf{Input:} Number of iterations $K$, 
	cumulative number of data collected after stage $k$, $B_k$, $k=1,\dots, K$, with $B_K=n$, i.e., $n$ is the total sampling budget (and we define $B_{0}=0$), auxiliary parameters $\eta>0$ (a small constant), $C_u>0$ (a large constant), and $C_l>0$ (a small constant),
	and initial exploration  policy $\pi_1$. \\
	\textbf{Initialization:} $k=1$\;
	\While{$k\leq K$}{
		Run $\pi_k$ for $B_{k} -B_{k-1}$ steps and collect the corresponding data;
		Calculate $\hat{P}_{B_k}$, $\hat\mu_{R,B_k}$, and $\hat{\sigma}^2_{R,B_k}$ based on the $B_k$ data points collected;\\
		Apply value-iteration using $\hat{P}_{B_k}$ and $\hat{\mu}_{R,B_k}$ to obtain $\hat{Q}_{B_k}$; \\
		Solve the following adaption of \eqref{eq:opt3} for the optimal $w_k$
        \begin{equation}\label{eq:optq}
        \min_{w\in\mathcal{W}_{\eta}} \max_{i\in \mathcal{S}}\max_{j\in \mathcal{A},j\neq a^*(i)} \sum_{(s,a)\in\mathcal{S}\times \mathcal{A}}\frac{\max\left(C_l, \min\left(C_u, \hat c_{ij,B_k}(s,a)\right)\right)}{w_{s,a}},
        \end{equation}
        where $\hat c_{ij,B_k}(s,a)$ denotes the plug-in estimator of $c_{ij}(s,a)$ defined in \eqref{eq:cij};\\
		Set $\pi_{k+1}(a=j|s=i)  = w_k((i-1)m_a + j)/ \sum^{m_a}_{l=1} w_k((i-1)m_a + l)$ and $k=k+1$\;		
	}
	\caption{Q-OCBA sequential updating rule for exploration} \label{alg:qocba}
\end{algorithm}

 Let $\pi^{e}$ denote the optimal exploration policy (here $e$ stands for exploration) and $w^*$ denote the corresponding stationary distribution, i.e., $w^*$ solves \eqref{eq:opt3}. Note that $\pi^e$ is different from $\pi^*$. The former is used to collect data, while the latter is used to collect reward in the deployment stage and is to be learned offline using the data collected under $\pi^e$. We also denote $\hat{\pi}_n$ as the exploration policy derived from Algorithm \ref{alg:qocba} after collecting $n$ data points. The following theorem establishes the asymptotic consistency of the Q-OCBA sequential updating rule.
\begin{theorem} \label{thm:Q-OCBA_convergence}
Under Assumptions \ref{assum:ind}, \ref{assum:unique}, and \ref{assum:postive_stat}, 
for Q-OCBA defined in Algorithm \ref{alg:qocba}, suppose one of the following two cases holds as we send the sampling budget $n=B_K$ to infinity: 
\begin{enumerate}
    \item The number of iterations $K$ is fixed. This implies that there exists at least one $\hat{k}\in\{1,\dots,K\}$, such that $B_{\hat{k}} - B_{\hat{k}-1}\rightarrow\infty$ as $n\rightarrow\infty$.
    \item The number of iterations $K\rightarrow\infty$ as $n\rightarrow \infty$ and 
    $B_k - B_{k-1} \geq N$ for all $k=1,2,\dots$. 
\end{enumerate}
Then, 
\[\hat{Q}_n\rightarrow Q \mbox{ and } 
\hat{\pi}_n\rightarrow \pi^{e} \mbox{ a.s. as $n\rightarrow\infty$}.\]
Moreover, in Case 2, if there exists a constant $\gamma>0$ such that $B_k - B_{k-1} \geq \gamma k$, $k=1,2,\dots$, then
\[\frac{1}{n}\sum_{t=1}^{n}\mathds{1}(s_t=s, a_t=a) \rightarrow w^*(s,a) \mbox{ a.s. for any $(s,a)\in\mathcal{S}\times\mathcal{A}$ as $n\rightarrow\infty$.}\]
\end{theorem}

The first result in Theorem \ref{thm:Q-OCBA_convergence} indicates that as we collect more data in Q-OCBA, our estimated $Q$-values will get closer to the true $Q$-values and our estimated exploration policy will get closer to the optimal exploration policy. This is achieved by showing that each state-action pair is visited infinitely often under our policy. The second result shows that if the number of iterations is increasing with the sampling budget and the batch size $B_k-B_{k-1}$ is properly increasing with $k$, the overall proportion of samples at each state-action pair is also getting closer to the optimal proportion as we collect more data. Recall that $w_k$ denotes the stationary distribution under policy $\pi_k$ in Algorithm \ref{alg:qocba}. From the first result, in Case 2, we have $w_k\rightarrow w^*$ as $k\rightarrow\infty$. Thus, for the second result, we only need to show that the convergence happens sufficiently fast. To characterize the convergence rate, we employ the large deviations theory for the empirical measure of the underlying Markov chain.

Theorem \ref{thm:Q-OCBA_convergence} lays out fairly general conditions for $K$ and $B_k$'s, under which Algorithm \ref{alg:qocba} achieves consistency. However, with a finite budget in practice, we need to be more mindful about the choice of these parameters. In Section \ref{sec:sen}, we numerically investigate  how to tune Q-OCBA. In general, we suggest setting $K$ between $6$ to $10$. As for $B_k$'s, when $n$ or $K$ is small, we suggest evenly splitting the sampling budget among the $K$ stages, i.e., $B_k-B_{k-1}= n/K$. When $n$ and $K$ are large, we can use gradually increasing batch sizes, e.g., $B_k-B_{k-1}=\gamma k$ where $\gamma=2n/(K(K+1))$.

\section{Approximations for Large State Space and/or Action Space} \label{sec:approx}
In many applications, the state space $\mathcal{S}$ and/or the action space $\mathcal{A}$ can be very large. In those settings, updating an $m_s \times m_a$ look-up table via $\mathcal{T}_{\mu_R, P}$ can be computationally infeasible, and approximation/parameterization is often employed. In this section, we study two approximation schemes: approximate value iteration and kernel representation. 

\subsection{Approximate Value Iteration}
When $m_s$ is large, approximate value iteration can be employed. It operates by applying a mapping $M$  over $\mathcal{T}_{\mu_R,P}$. In many cases, $M = M_g \circ M^{\mathcal{S}_0}_I$, where $M^{\mathcal{S}_0}_I$ is a dimension-reducing ``inherit'' mapping $\mathbb{R}^{m_s m_a} \rightarrow \mathbb{R}^{m_{s_0} m_a}$ and $M_g$ is the ``generalization'' mapping $\mathbb{R}^{m_{s_0} m_a} \rightarrow \mathbb{R}^{m_{s} m_a}$ that lifts back to the full dimension. 
By selecting a ``representative'' subset $\mathcal{S}_0 \subset \mathcal{S}$ with cardinality $m_{s_0} \ll m_s$, $M^{\mathcal{S}_0}_I$ is defined as 
$M^{\mathcal{S}_0}_I(x) = [x(i,j)]_{i\in \mathcal{S}_0, 1\leq j \leq m_a}$ where  $[x_i]_{i\in I}$ denotes the set of $x_i$'s whose index $i\in I$. 
The idea is that we update the Q-values for $s\in \mathcal{S}_0$ only, and use the mapping $M_g$ to extrapolate the other Q-values.
In this setup, we define $Q^{M}$ as a fixed point of the operator 
$M \circ \mathcal{T}_{\mu_R, P}(\cdot)$, 
and $V^M(s)=\max_{a} Q^M(s,a)$. We also define $Q^M_{\mathcal{S}_0}=M^{\mathcal{S}_0}_I\circ\mathcal{T}_{\mu_R, P}(Q^M)$ as the dimension-reduced Q-values. 




We next study the large-sample behavior of an approximate value iteration scheme. 
We first introduce a few assumptions.
Assume $\mathcal{S}$ is a complete vector space.
Define the max norm as 
$\|a\|_{\infty}=\max_{i}|a_i|$.
To guarantee the existence of $Q^M$, we make the following assumption on the generalization map $M_g$:
\begin{assumption}\label{ass:large_state_unique}
	$M_g$ is a max-norm non-expansion mapping in $\mathcal S$, i.e., 
	\[\|M_g(x)-M_g(y)\|_{\infty} <\| x-y\|_{\infty} \mbox{ for any $x, y\in \mathcal{S}$.}\]
\end{assumption}

Under Assumption \ref{ass:large_state_unique}, $M \circ \mathcal{T}_{\mu_R, P}(\cdot)$ is still a contraction mapping (Theorem 3.1 in \citealt{gordon1995stable}). Thus, $Q^M$ is well defined. On the other hand, when this assumption is not satisfied, there exist MDPs for which $M \circ \mathcal{T}_{\mu_R, P}(\cdot)$ does not have a unique fixed point \citep{gordon1995stable}.
Assumption \ref{ass:large_state_unique} is generally satisfied by ``local'' approximation methods such as linear interpolation, $k$-nearest neighbors, and local weighted averaging. Moreover, \cite{gordon1995stable} defines the notion of an averager and shows that any $M_g$ associated with an averager is max-norm non-expansion (Theorem 3.2 in \citealt{gordon1995stable}). In particular, $M_g$ is an averager if the fitted values (i.e., $Q(s,a)$ for $s\in \mathcal{S}$) are the weighted average of some target values (i.e., $Q(s,a)$ for $s\in \mathcal{S}_0$) and some predetermined constants.


We also need the following analogs of Assumptions \ref{assum:unique} and \ref{assum:postive_stat} to $Q^M$ and $\mathcal S_0$:
\begin{assumption}\label{ass:large_state_unique2}
	For any state $s\in \mathcal{S}$, $\arg\max_{a\in \mathcal{A}} Q^M(s,a)$ is unique.
\end{assumption}
\begin{assumption}\label{ass:large_state_stationary}
	For the Markov Chain with transition probability $\tilde P^{\pi}$, the set of states $\{(s,a): s\in\mathcal{S}_0, a\in\mathcal{A}\}$ is irreducible.
\end{assumption}
Let $N_0=m_{s_0}m_a$ and $I_{\mathcal{S}_0}=\{(i-1)m_a+j: i\in\mathcal{S}_0, j\in \mathcal{A}\}$. 
With Assumption \ref{ass:large_state_stationary}, 
we denote $\tilde{P}_{\mathcal{S}_0}^M$ as a sub-matrix of $\tilde P^{\pi}$ that only contains rows with indices in $I_{\mathcal{S}_0}$.
We also denote $\mathcal{S}_0(i)$ as the $i$-th element (state) in $\mathcal{S}_0$. 
Define $\hat{Q}^M_n$ as the empirical estimate of $Q^M$ built on $n$ observations. 

\begin{theorem}\label{thm:clt_large}
	Under Assumptions \ref{ass:large_state_unique}, \ref{ass:large_state_unique2}, and \ref{ass:large_state_stationary},
	if $M_g$ is continuously differentiable, then  
	\[\sqrt{n}(\hat{Q}^M_n - Q^M) \Rightarrow \mathcal{N}(0,\Sigma^M_{\mathcal{S}_0}) \mbox{ as $n\to \infty$},\]
	where
	\begin{align*}
	\Sigma^M_{S_0} =& (I-\gamma \nabla M_g(Q^M_{\mathcal{S}_0}) \tilde{P}_{\mathcal{S}_0}^M)^{-1}
	\nabla M_g(Q^M_{\mathcal{S}_0}) (W^{\mathcal{S}_0})^{-1}[D_R^{\mathcal{S}_0} +  \gamma^2D_Q^{\mathcal{S}_0}]\\
	&\nabla M_g(Q^M_{\mathcal{S}_0})^T((I-\gamma \nabla M_g(Q^M_{\mathcal{S}_0}) \tilde{P}_{\mathcal{S}_0}^M)^{-1})^T,
	\end{align*}
	$\nabla M_g$ is the Jacobian of the mapping $M_g$,
	and $W^{\mathcal{S}_0}$, $D_R^{\mathcal{S}_0}$, $D_Q^{\mathcal{S}_0}$ are $N_0 \times N_0$ diagonal matrices with 
	\[\begin{split}
	 		&W^{\mathcal{S}_0}((i-1)m_a+j,(i-1)m_a+j)=w(\mathcal{S}_0(i),j),\\
	 		&D_R^{\mathcal{S}_0}((i-1)m_a+j,(i-1)m_a+j)=\sigma^2_{R}(\mathcal{S}_0(i),j),\\
	 		&D_Q^{\mathcal{S}_0}((i-1)m_a+j,(i-1)m_a+j)=(V^M)^T\Sigma_{P_{\mathcal{S}_0(i),j}}V^M.
	\end{split}\]
\end{theorem}

An important class of approximation scheme that satisfies Assumption \ref{ass:large_state_unique} is the nearest neighbors \citep{shah2018q}. Given data $q\in\mathbb{R}^{m_{s_0}m_a}$, the nearest-neighbor averaging can be expressed as
\begin{equation}\label{eq:NN}
M_g(q)(s,a)=\sum_{i=1}^{m_{s_0}}K(s,\mathcal{S}_0(i))q(\mathcal{S}_0(i),a), ~ \forall (s,a)\in \mathcal{S}\times\mathcal{A},
\end{equation}
where 
$K(\cdot,\cdot)\geq 0$ is the weighting kernel function satisfying $\sum_{i=1}^{m_{s_0}}K(s,\mathcal{S}_0(i))=1$ for any $s\in \mathcal{S}$ and $K(x,y)=0$ if $\|x-y\|>h$, where $h$ is the width parameter. 
Then, for any $s\in\mathcal{S}$,
$Q^M(s,a)=M_g(Q^M_{\mathcal{S}_0})$;
and for any $s_0\in \mathcal{S}_0$,
\[
Q_{\mathcal{S}_0}^M(s_0,a)
= \mu_R(s_0,a)+\gamma \mathbb{E}\left[\max_{a^{\prime}\in\mathcal{A}}M_g(Q_{\mathcal{S}_0}^M)(s^{\prime},a^{\prime})|s_0,a\right].
\]
We note from \eqref{eq:NN} that the nearest-neighbor averaging is an averager and is thus max-norm non-expansion. 



\subsection{Kernel Representation}
When $N=m_sm_a$ is large, a common practice is to parameterize the high-dimensional Q-values or policy functions using a set features. In this subsection, we study one particular parameterization scheme -- the kernel representation \citep{yang2020reinforcement},
where we assume the transition probability $P$ can be fully embedded in a kernel space:
\begin{assumption} \label{assum:P_kernel}
For each $(s,a) \in \mathcal{S} \times \mathcal{A}$ and $s' \in \mathcal{S}$,
there exists a transition core matrix $M^*\in {R}^{k_1}\times {R}^{k_2}$, such that $$ P(s'|s, a) = \phi(s,a)^T M^* \psi(s')$$ where $\phi(s,a)\in \mathcal{R}^{k_1}$ and $\psi(s^{\prime})\in \mathcal{R}^{k_2}$ are feature functions. 
We assume that $\phi(s,a)$ and $\psi(s^{\prime})$ are known, while $M^*$ has to be estimated from data.
\end{assumption}
$k_1$ and $k_2$ in Assumption \ref{assum:P_kernel} are typically much smaller than $N$. Thus, we can approximate $P$ by a lower dimensional manifold. Similar assumptions are made for the distribution of the reward:
\begin{assumption} \label{assum:R_kernel}
For each $(s,a) \in \mathcal{S} \times \mathcal{A}$, 
there exist parameters $\theta_{\mu} \in {R}^{d_1} $ and $\theta_{\sigma} \in {R}^{d_2}$ , such that $\mathbb{E}[R(s,a)] = \phi_{\mu}(s,a)^T \theta_{\mu} $ and $\mathbb{E}[R^2(s,a)] = \phi_{\sigma}(s,a)^T \theta_{\sigma} $ where $\phi_{\mu}(s,a)\in \mathcal{R}^{d_1}$ and $\phi_{\sigma}(s,a) \in \mathcal{R}^{d_2}$ are feature functions.
We assume that $\phi_\mu(s,a)$ and $\phi_\sigma(s,a)$ are known, while $\theta_\mu$ and $\theta_{\sigma}$ have to be estimated from data.
\end{assumption} 
 
We next define a few more notations to facilitate our subsequent discussion.
Define $M^{*}_v$ as a vector by concatenating columns of $M^{*}$, i.e, $M^{*}_v= [(M^*)_1^{T}, \dots, (M^{*})^{T}_{k_2}]^T$, where $(M^{*})_i$ is the $i$-th column of $M^*$. Then, $ P(s^{\prime}|s, a)$ can be written as
\[\begin{split}
P(s^{\prime}|s, a) 
&= \left(\begin{pmatrix} \phi(s,a)^T  & &  \\
	& \ddots & \\
	& & \phi(s,a)^T \\
	\end{pmatrix} M^{*}_v\right)^T \psi(s^{\prime}) \\
&= \psi(s^{\prime})^T\begin{pmatrix} \phi(s,a)^T  & &  \\
	& \ddots & \\
	& & \phi(s,a)^T \\
	\end{pmatrix} M^{*}_v  
=: F_{\phi, \psi}(s,a,s^{\prime}) M^{*}_v, 
\end{split}\] 
where 
\[F_{\phi, \psi}(s,a,s^{\prime}) =\psi(s^{\prime})^T\begin{pmatrix} \phi(s,a)^T  & &  \\
	& \ddots & \\
	& & \phi(s,a)^T \\
	\end{pmatrix}.
\]
When $n$ data points are available, the least-squares estimation of $M^*$ takes the form: 
\[\hat M_n = A_n^{-1} \sum_{t=1} ^{n} \phi(s_{t}, a_{t}) \psi(s^{\prime}_t)^T K_{\psi}^{-1},\] 
where $K_{\psi} = \sum_{s} \psi(s) \psi(s)^T$  and $A_n = I+ \sum_{t=1} ^{n} \phi(s_{t}, a_{t})\phi(s_{t}, a_{t})^T$.
Define the vector $\hat M_{n, v}$ by concatenating columns of  $\hat M_{n}$, i.e,  $\hat M_{n, v} = [(\hat M_n)_1^T, \dots, (\hat M_n)_{k_2}^T ]^T$, where $(\hat M_n)_i$ is the $i$-th column of $\hat M_n$. Similar to $P(s'|s,a)$, we can define
\begin{equation}\label{eq:PK}
\hat P_{n}^K(s'|s,a) =  F_{\phi, \psi}(s,a,s') \hat M_{n,v}.
\end{equation}

Next, consider an extended Markov chain $\bar X = (s, a, s^{\prime})$ with transition kernel
$$
\bar P^{\pi}((s_2, a_2, s_2^{\prime})|(s_1, a_1, s_1^{\prime})) = 
\begin{cases}
\pi(a_2|s_2) P(s_2^{\prime}|s_2, a_2), & \text {if $s_1^{\prime} = s_2$;}\\
0, & \text {otherwise}.
\end{cases}$$ 
Let $\bar{\mathcal{S}}= \{x_1, x_2, \dots, x_{m_s^2 m_a} \}$ denote the state space of $\bar X$. The states are indexed in the same order as they appear in the rows of $\bar P^{\pi}$. For any state $x$ in $\bar{\mathcal{S}}$, denote its original $(s, a, s')$ representation as $(s(x), a(x), s'(x))$. Let $\bar\omega$ denote the stationary distribution of $\bar X$. Then, $\bar\omega_x = w_{s(x),a(x)} P(s^{\prime}(x)|s(x),a(x))$.
Lastly, define 
\[G(x) = \phi(s(x), a(x)) \psi^T(s^{\prime}(x)) K_{\psi}^{-1} - \phi(s(x), a(x))\phi(s(x), a(x))^T M^*\]
and the vector $G_{v}(x)$ by concatenating columns of  $G(x)$,
i.e., $G_v(x) =[(G(x))^T_1, \dots, (G(x))^T_{k_2}]^T$, where $(G(x))_i$ is $i$-th column of $G(x)$.

\begin{lemma} \label{thm:CLT_kernel_P}
Assume $\mathbb{E}_{\bar\omega}[G^2_v(\bar X)] <\infty$. Then, for any $(s,a,s^{\prime})\in \bar{\mathcal{S}}$,
\[\sqrt{n} (\hat P_{n}^K(s'|s,a) - P(s'|s,a)) \Rightarrow \mathcal{N}({\bf 0}, F_{\phi, \psi}(s,a,s') \Upsilon_E^{-1}\Sigma_{G}\Upsilon_E^{-1}  (F_{\phi, \psi}(s,a,s'))^T), \]
 where 
\[\Upsilon_E = \begin{pmatrix}  I+\sum_{s,a} w_{s,a} \phi(s, a)\phi(s, a)^T  & &  \\
& \ddots & \\
& &  I+\sum_{s,a} w_{s,a} \phi(s, a)\phi(s, a)^T \\
\end{pmatrix} \]
and 
\[\Sigma_{G} = \Var_{\bar\omega} (G_v(\bar X_0)) + 2\sum^{\infty}_{t=1} \Cov_{\bar\omega}(G_v(\bar X_0), G_v(\bar X_t)).\]
Here, $\Var_{\bar\omega}(G_v(\bar X_0))$ is the covariance matrix of $G_v(\bar X_0)$ where $\bar X_0$ follows the stationary distribution $\bar\omega$ and  $\Cov_{\bar\omega}(G_v(\bar X_0), G_v(\bar X_i))$ is a matrix with $\{\Cov_{\bar\omega}(G_v(\bar X_0), G_v(\bar X_t))\}_{s,r}= \Cov_{\bar\omega}((G_v(\bar X_0))_s, (G_v(\bar X_t))_r)$.
Specifically,
\[ 
\Var_{\bar\omega} (G_v(\bar X_0)) = \sum_{(s,a,s^{\prime})\in \bar\S} w_{s,a} P(s^{\prime}|s,a) G_v(s,a,s^{\prime})  G_v(s,a,s^{\prime})^T  
\]
and
\[\begin{split}
&\sum^{\infty}_{t=1} \Cov_{\bar\omega}(G_v(\bar X_0), G_v(\bar X_t))\\
=& \sum_{x_k,x_l\in \bar\S}  w_{s(x_k),a(x_k)} P(s^{\prime}(x_k)|s(x_k),a(x_k)) G_v(x_k)G_v(x_l)^T \left\{(I-\bar P^{\pi})^{-1}\right\}_{k,l}.
\end{split}\]
\end{lemma}

Note that the asymptotic variance $F_{\phi, \psi}(s,a,s') \Psi_E^{-1}\Sigma_{G}\Psi_E^{-1}  (F_{\phi, \psi}(s,a,s'))^T$ can be estimated using plug-in estimators. The sampling policy will affect $\Sigma_G$ through the transition matrix $\bar P^{\pi}$ and the stationary distribution $\bar\omega$.

We next study the estimation of the mean reward $\mu_R$. 
Let $\Phi_{\mu, n}$ denote the observed feature  matrix for the reward, i.e., the $t$-th row of $\Phi_{\mu, n}$ is $\phi_{\mu}(s_t,a_t)^T$, $t=1,\dots,n$. Similarly, let $\Phi_{\sigma, n}$ denote the observed feature  matrix for the squared reward, i.e., the $t$-th row of $\Phi_{\sigma,n}$ is $\phi_{\sigma}(s_t,a_t)^T$, $t=1,\dots,n$.
We also denote $Y_{\mu, n}$ as the observed reward vector, i.e., the $t$-th row of $Y_{\mu,n}$ is $r_t(s_t,a_t)$, and $Y_{\sigma, n}$ as the observed squared reward vector whose $t$-th entry is $r_t(s_t,a_t)^2$, $t=1,\dots,n$. Then, the least-squares estimators for $\theta_{\mu}$ and $\theta_{\sigma}$ take the form
\[\hat{\theta}_{\mu, n} = (\Phi^T_{\mu, n} \Phi_{\mu, n})^{-1} \Phi^T_{\mu, n} Y_{\mu, n}
~\mbox{ and }~
\hat{\theta}_{\sigma, n} = (\Phi^T_{\sigma, n} \Phi_{\sigma, n})^{-1} \Phi^T_{\sigma, n} Y_{\sigma, n} \mbox{ respectively.}\]
In this case, $\mu_R(s,a)$ and $\sigma^2_R(s,a)$ can be estimated via 
\begin{equation}\label{eq:muK}
\hat{\mu}_{R, n}^K(s,a) = \phi_{\mu}(s,a)^T \hat{\theta}_{\mu,n} 
\end{equation}
and
\[(\hat{\sigma}^{K}_{R, n}(s,a))^2 = \phi_{\sigma}(s,a)^T \hat{\theta}_{\sigma, n} -(\phi_{\mu}(s,a)^T \hat{\theta}_{\mu, n})^2\]
respectively.

Consider a Markov chain $\tilde X=(s,a)$ with transition kernel
\[\tilde P^{\pi}(s_2, a_2|s_1,a_1)=P(s_2|s_1,a_1)\pi(a_2|s_2).\]
We denote $\tilde \S=\{x_1, \dots, x_{m_sm_a}\}$ as the state space of $\tilde X$, where the states are indexed in the same order as they appear in the rows of $\tilde P^{\pi}$. For any state $\tilde x\in\tilde \S$, denote its $(s,a)$ representation as $(s(\tilde x), a(\tilde x))$. Recall that $w$ is the stationary distribution of $\tilde X$.
Let 
\[H(\tilde X) =\left(r(s(\tilde X),a(\tilde X))-\mu_R(s(\tilde X),a(\tilde X))\right)\phi_{\mu}(s(\tilde X), a(\tilde X)).
\]

\begin{lemma} \label{thm:CLT_kernel_R}
Assume $\mathbb{E}_{w}[(H(\tilde X))^2] <\infty$. Then, for any $(s,a)\in \tilde\S$,
\[ \sqrt{n}(\hat{\mu}_{R, n}^K(s,a) - \mu_R(s,a)) \Rightarrow \mathcal{N}({\bf 0}, \phi_{\mu}(s,a)^T\Upsilon_{\mu}^{-1}\Sigma_{H}\Upsilon_{\mu}^{-1}\phi_{\mu}(s,a)), \]
where 
\[\Upsilon_{\mu} = \mathbb{E}_w[\phi_{\mu}(\tilde X_0)(\phi_{\mu}(\tilde X_0))^T]\] 
and    
\[
\Sigma_{H} = \Var_w (H(\tilde X_0)) + 2\sum^{\infty}_{t=1} Cov_w(H(\tilde X_0), H(\tilde X_t)).
\] 
Specifically, 
\[\begin{split}
\Var_{w} (H(\tilde X_0)) &= \sum_{(s,a)\in \tilde\S} w_{s,a} H(s,a)H(s,a)^T\\ 
&=  \sum_{(s,a)\in \tilde\S} w_{s,a} \sigma_R^2(s,a) \phi_{\mu} (s,a)\phi_{\mu}(s,a)^T \\
&= \sum_{(s,a)\in \tilde\S} w_{s,a} \left(\phi_{\sigma}(s,a)^T\theta_{\sigma}\right) \phi_{\mu} (s,a)\phi_{\mu}(s,a)^T\\
\end{split}\]
and
\[\sum^{\infty}_{t=1} \Cov_w(H(\tilde X_0), H(\tilde X_t))
= \sum_{x_k,x_l\in \tilde \S}  w_{s(x_k),a(x_k)} H(s(x_k),a(x_k))H(s(x_l),a(x_l)) \left\{(I-\tilde{P}^{\pi})^{-1}\right\}_{k,l}.\]
\end{lemma}



Note that the asymptotic variance $\phi_{\mu}(s,a)^T\Phi_{\mu}^{-1}\Sigma_{H}\Phi_{\mu}^{-1}\phi_{\mu}(s,a)$ can be estimated using plug-in estimators. Here the sampling policy will affect $\Sigma_H$ through the transition matrix $\tilde P^{\pi}$ and the stationary distribution $w$. 

Based on the above analysis, the kernel-based Q-value estimates can be expressed as the empirical fixed point of the Bellman operator
\[
\hat Q_n^K=\mathcal{T}_{\hat P_n^K,\hat\mu_{R,n}^K}(\hat Q_n^K),
\]
where $\hat P_n^K$ is defined in \eqref{eq:PK} and $\hat\mu_{R,n}^K$ is defined in \eqref{eq:muK}.
Combining Lemmas \ref{thm:CLT_kernel_P} and \ref{thm:CLT_kernel_R}, we have the following central limit theorem for $\hat Q^K_n$.


\begin{theorem} \label{thm:CLT_kernel_Q}
Under assumptions  \ref{assum:P_kernel} and  \ref{assum:R_kernel}, assume $\mathbb{E}_{\bar\omega}[G^2_v(X))] < \infty$ and $\mathbb{E}_{w}[H^2(\tilde X)]<\infty$. Then,
\[ \sqrt{n}(\hat{Q}^K_n - Q) \Rightarrow \mathcal{N}({\bf 0},\Sigma^K) \mbox{\ \ as\ \  $n\to\infty$},\] where
\begin{equation}
	\Sigma^K = (I-\gamma \tilde{P}^{\pi^{*}})^{-1}(\phi_{\mu}^T\Upsilon_{\mu}^{-1}\Sigma_{H}\Upsilon_{\mu}^{-1}\phi_{\mu} + \gamma^2V_D F_{\phi, \psi} \Upsilon_E^{-1}\Sigma_{G}\Upsilon_E^{-1}  F^T_{\phi, \psi} V_D) ((I-\gamma \tilde{P}^{\pi^{*}})^{-1})^T,\label{formula var}
	\end{equation}
$\phi_{\mu}$ is an $(m_sm_a)\times d_1$ matrix whose $i$-th row is $\phi_{\mu}(s_i,a_i)$, 	
$F_{\phi, \psi} $ is an $(m_s^2 m_a)\times k_1$ matrix whose the $i$-th row is $F_{\phi, \psi}(s_i, a_i, s_i^{\prime})$, 
\[V_D = \begin{pmatrix} (V^{*})^T  & & & & \\
	& \ddots & & &\\
	& & (V^{*})^T & &\\
	& & & \ddots & \\ & & & & (V^{*})^T
	\end{pmatrix},\]
$\Upsilon_E$ and $\Sigma_G$ are defined in Lemma \ref{thm:CLT_kernel_P}, 
$\Upsilon_{\mu}$ and $\Sigma_H$ are defined in Lemma \ref{thm:CLT_kernel_R}.
\end{theorem}

\subsection{Q-OCBA Variants}
In this section, we develop the Q-OCBA variants for the nearest-neighbors approximation and the kernel representation. 

We first consider the nearest-neighbors approximate value iteration. 
Based on Theorem \ref{thm:clt_large},
let 
\[h^M_{ij}=\left(Q^M(i,a^*(i))-Q^M(i,j)\right)^2/\sigma_{\Delta Q^M}^2(i,a^*(i),j),\]
where 
\[\sigma^2_{\Delta Q^M}(i,a^*(i),j) = (e_{(i-1)m_a+a^*(i)}-e_{(i-1)m_a+j})^T\Sigma^M_{\mathcal{S}_0} (e_{(i-1)m_a+a^*(i)}-e_{(i-1)m_a+j}).\]
Then, we can modify $Q$-OCBA to $Q^M$-OCBA, where our goal is to find a sampling policy that solves 
\begin{equation}\label{eq:opt_M_2}
\max_{w\in\mathcal{W}_{\eta}} \min_{i\in \mathcal{S}}\min_{j\in \mathcal{A},j\neq a^*(i)}h^M_{ij}.
\end{equation}

Next, we consider the kernel representation. Based on Theorem \ref{thm:CLT_kernel_Q},
let 
\[h^K_{ij}=\left(Q^K(i,a^*(i))-Q^K(i,j)\right)^2/\sigma_{\Delta Q^K}^2(i,a^*(i),j),\]
where 
\[\sigma^2_{\Delta Q^K}(i,a^*(i),j) = (e_{(i-1)m_a+a^*(i)}-e_{(i-1)m_a+j})^T\Sigma^K (e_{(i-1)m_a+a^*(i)}-e_{(i-1)m_a+j})\]
Then, we can modify Q-OCBA to $Q^K$-OCBA, where our goal is to find an exploration policy that solves 
\begin{equation}\label{eq:opt_K_2}
\max_{w\in\mathcal{W}_{\eta}} \min_{i\in \mathcal{S}}\min_{j\in \mathcal{A},j\neq a^*(i)}h^K_{ij}.
\end{equation} 

In both cases discussed above, we can modify sequential updating rule in Algorithm \ref{alg:qocba} to develop the corresponding algorithmic implementations. In particular, for $Q^M$-OCBA, we replace \eqref{eq:optq} in Algorithm \ref{alg:qocba} with \eqref{eq:opt_M_2}. For $Q^K$-OCBA, we replace \eqref{eq:optq} with \eqref{eq:opt_K_2}.

\section{Numerical Experiments}\label{sec:numerics}
In this section, we conduct numerical experiments to support our large-sample results in Section \ref{sec:ucq}, demonstrate the performance of Q-OCBA against some benchmark exploration policies, and analyze how to tune the parameters when implementing Q-OCBA. We use the RiverSwim problem  with $m_s$ states and two actions at each state: swimming left (0) and swimming right (1). Figure \ref{fig:swim} provides a pictorial illustration of the problem. 
The triplet above each arc in Figure \ref{fig:swim} represents i) the action, ii) the transition probability to the next state given the current state and action, and iii) the reward under the current state and action. 
For example, in $(1,0.3,0)$, $1$ is the action of swimming right, $0.3$ is the probability of moving to the next state as the directed arc indicates, and $0$ is the reward for the current state-action pair.
Note that, in this problem, rewards are  only given at the left and right boundary states, where the left-boundary reward, $r_L<10$, will be varied in our experiments. We set the discount factor $\gamma=0.95$.

Note that swimming
to the right (against the current of the river) will more often than not leave the agent in the same state, but will sometimes
move the agent to the right, and with a much smaller probability move the agent to the left. Swimming to the left (with the current)
always succeeds in moving the agent to the left, until the leftmost state is reached at which point swimming to the left
yields a small reward (i.e., $r_L<10$). The agent receives a much larger reward for reaching the rightmost state. This MDP requires a sequence of appropriate actions in order to explore effectively to the right, and is challenging to learn when nothing is known at the beginning. Thus, it is a classic example to  study exploration policies (see, e.g., \citealt{strehl2008analysis, osband2013more}).



\begin{figure}[ht]
	\begin{center}
		\caption{RiverSwim Problem} \label{fig:swim}
		\includegraphics[width=0.8\textwidth]{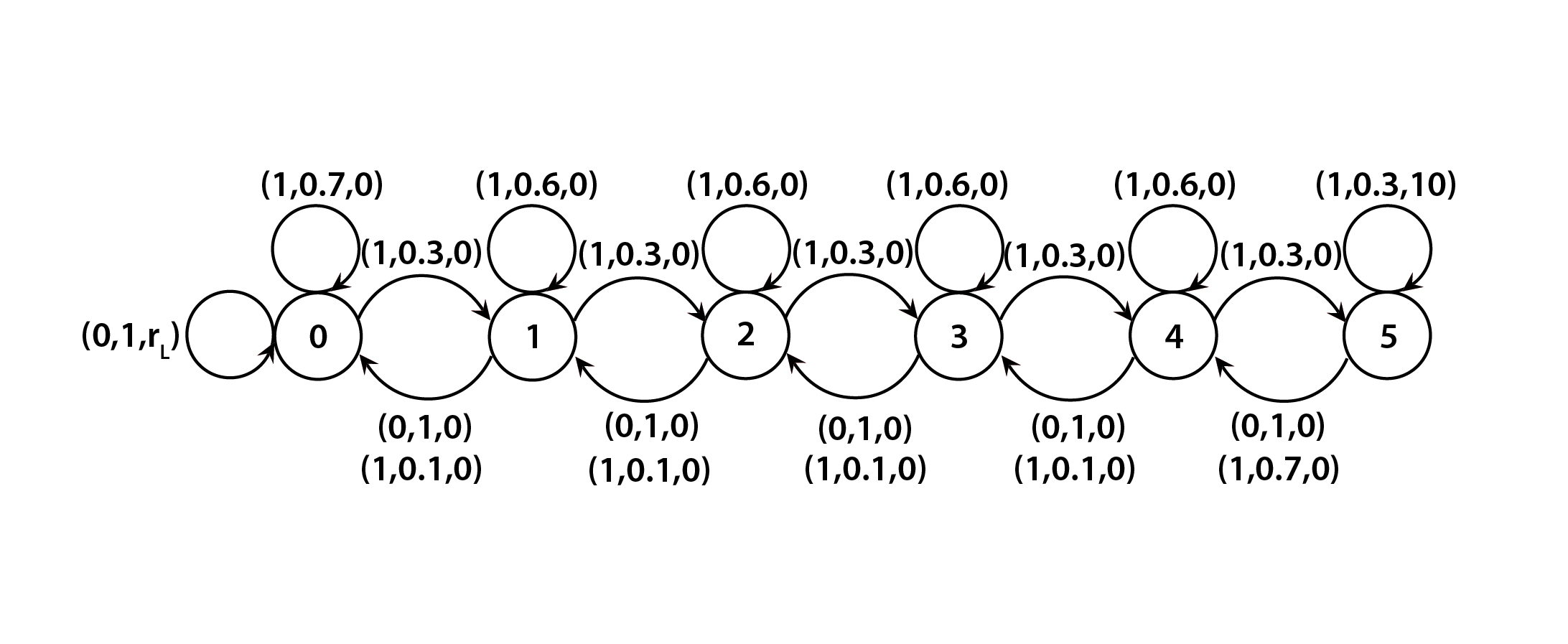}
	\end{center}
\end{figure}

\subsection{Statistical Quality of Interval Estimations}\label{sec:interval}
We first demonstrate the validity of our large-sample asymptotic results.
All coverage rates reported in our numerical experiments are estimated using $10^3$ independent repetitions of the procedure. 
The estimation errors (half width of the 95\% confidence intervals (CIs)) are around $0.01$.
Our target confidence level (coverage rate) is $95\%$.


We start with the RiverSwim problem with a small state space, i.e., $m_s=6$ as depicted in Figure \ref{fig:swim}, and $r_L=1$.
Table \ref{tab:cov_supp} reports the coverage rates of the asymptotically valid $95\%$ CI  
for $Q$, $V^*$, and $\chi^*$, constructed based on Theorem \ref{thm:clt_basic} and Corollary \ref{cor:V_R_basic}.
The data collection policy we employ is a random exploration policy under which we swim right with probability $0.8$ at each state, i.e., $\pi(1|s)=0.8$. 
We observe that the coverage rate approaches the nominal $95\%$ as the number of observations $n$ increases.
Specifically, when $n$ is $10^2$, the coverage rates are all around $30\%$, when $n$ is $5\times 10^2$, they are around $90\%$, and when $n\geq 10^3$, they are around $95\%$. These suggest a sample size of $10^3$ is enough to elicit our asymptotic results 
in this problem.
\begin{table}[H]
	\caption{Coverage for $Q(s,a)$, $V^{*}$ and $\chi^{*}$ values using exact tabular update} \label{tab:cov_supp}
	\centering
	\begin{tabular}{c|c|c|c|c}\hline
		$n$ & $10^2$ & $5\times 10^2$ & $ 10^3$ & $ 10^4$   \\ \hline
		$Q(1,0)$  & $0.2$ & $0.91$ &
		$0.95$ &
		$0.95$    \\ 
		$Q(3,1)$  & $0.31$ & $0.89$ & $0.95$  &
		$0.95$     \\ 
		$Q(6,0)$  & $0.30$ & $0.87$ & $0.94$  &
		$0.95$     \\ \hline
		$V^{*}(2)$  & $0.29$ & $0.89$ & $0.94$  &
		$0.96$      \\ 
		$V^{*}(4)$  & $0.30$ & $0.90$ & $0.94$  &
		$0.95$      \\ 
		$V^{*}(5)$  & $0.29$ & $0.88$ & $0.94$  &
		$0.95$      \\ \hline
		$\chi^{*}$  & 
		$0.30$ & $0.88$ &  $0.94$  &
		$0.95$      \\ \hline
		
	\end{tabular}
\end{table}

We next consider the RiverSwim problem with a large state space, i.e., $m_s=31$, and $r_L=1$.
We use the linear interpolation approximation with $\mathcal{S}_0=\{ 1,4,\dots,28,31\}$. 
Table \ref{tab:cov_linear} reports
the coverage rates of the asymptotically valid $95\%$ CI
for $Q$, $V^*$, and $\chi^*$, constructed based on Theorem \ref{thm:clt_large}, with different sample size and under different exploration policies.
In particular, we vary the values of $\pi(1|s)$ from $0.8$ to $0.9$. 

Compared to the exact update with a small state space, the coverage-rate convergence for the approximate update with a larger state space appears slower. Specifically, comparing Tables \ref{tab:cov_supp} and \ref{tab:cov_linear} that use the same random exploration with $\pi(1|s)=0.8$, 
we note that while the nominal coverage is obtained when $n=10^4$ in the exact update with a smaller state space for all studied quantities, this sample size is not enough for approximate update with $m_{s_0}=11$, where it appears that we need $n$ to be of order $10^7$ to obtain the nominal coverage. 

We also note that, when the coverage is very far from the nominal level, discrepancies can show up among the estimates of $Q$, $V^*$, and $\chi^*$. For example, when $\pi(1|s)=0.8$ and $n=10^4$, the coverage rates of $Q$ and $V^*$ are around $47\%-49\%$ but that of $\chi^*$ is as low as $2\%$, and when $\pi(1|s)=0.9$ and $n=10^6$, the coverage rates of $Q$ and $V^*$ are around $33\%-35\%$ but that of $\chi^*$ is only $4\%$ . However, when the coverage rate is close to $95\%$, all these quantities appear to attain this accuracy simultaneously in all the cases considered. Nonetheless, the convergence behaviors predicted by Theorem \ref{thm:clt_large} are observed to hold.

Furthermore, Table \ref{tab:cov_linear}  shows that the rates of convergence to the nominal coverage are quite different for different values of $\pi(1|s)$. The convergence rate when $\pi(1|s)=0.85$ seems to be the fastest, with the coverage rate close to $95\%$ already when $n=10^5$. On the other hand, when $\pi(1|s)=0.8$, the coverage rate is close to $95\%$ only when $n=10^6$, and when $\pi(1|s)=0.9$, even $n=10^7$ is not large enough to have the target coverage rate. These caution that estimation quality can be quite sensitive to the exploration policy (the quality of data collected). We investigate the efficiency of different exploration policies further in the next subsection.


\begin{table}[H]
	\caption{Linear interpolation in approximate value iteration} \label{tab:cov_linear}
	\centering
	\begin{tabular}{c|c|c|c|c|c}\hline
		& $n$ & $10^4$ & $10^5$ & $ 10^6$ & $10^7$   \\ \hline
		\multirow{3}{*}{$\pi(1|s) = 0.8$} &
		Average $Q$ coverage & $0.47$ &  $0.73$  &
		$0.93$  & $0.95$    \\ 
		&Average $V^{*}$ coverage  & $0.49$ &  $0.74$  &
		$0.94$    & $0.95$  \\ 
		& $\chi^{*}$ coverage  & 
		$0.02$ &  $0.37$  &
		$0.93$   & $0.94$   \\ \hline
		\multirow{3}{*}{$\pi(1|s) = 0.85$} & Average $Q$ coverage& $0.45$ &  $0.92$  &
		$0.94$  & $0.95$   \\ 
		&Average $V^{*}$ coverage & $0.48$ &  $0.93$  &
		$0.94$   & $0.95$   \\ 
		&$\chi^{*}$ coverage & 
		$0.12$ &  $0.91$  &
		$0.95$   & $0.94$   \\ \hline
		\multirow{3}{*}{$\pi(1|s) = 0.9$}  & Average $Q$ coverage  & $0.30$ &  $0.37$  &
		$0.33$   & $0.69$   \\ 
		&Average $V^{*}$ coverage   & $0.34$ &  $0.39$  &
		$0.35$   & $0.70 $   \\
		&$\chi^{*}$ coverage  & 
		$0.01$ &  $0.01$  &
		$0.04$  & $0.55$    \\ \hline
	\end{tabular}
\end{table}

\subsection{Efficiency of Exploration Policies} \label{sec:policy_num}

In this section, we investigate the efficiency of Q-OCBA defined in Algorithm \ref{alg:qocba}.
We first compare Q-OCBA to other benchmark policies for the task of learning the optimal policy. In particular, given a sampling budget, we compare which policy would be able to collect a more ``informative" data set to learn the optimal policy. Note that Q-OCBA and other benchmark policies are used to collect data, and the optimal policy is trained offline using the data collected under these policies (i.e., by solving the empirical fixed point of the Bellman operator). We then discuss how to tune Q-OCBA, e.g., choosing the number of iterations $K$.

\subsubsection{Comparison with other benchmark policies} \label{sec:num_compare}
We compare Q-OCBA to four benchmark policies: i) $\epsilon$-greedy with $\epsilon =0.2$, ii) random exploration (RE) with $\pi(1|s)=0.6$ and  $\pi(1|s)=0.8$, 
iii) UCRL2 (a variant of UCRL) with $\delta = 0.05$ \citep{jaksch2010near}, and iv) PSRL \citep{osband2013more}.  
We do sequential updating for Q-OCBA, PSRL, and  $\epsilon$-greedy. 

We first set $m_s=6$, and vary the values of $r_L$. $r_L$ is either set as a constant or a Gaussian random variable. $\hat{P}_{B_0}(s'|s, a)$ is initialized to be  $1/m_s$, and $\hat{\mu}_{R, B_0}(s,a)$ and $\hat{\sigma}^2_{R, B_0}(s,a)$ are initialized to be $1$ for all $(s, a)$ pairs.
Figure \ref{fig:seqm} compares the probability of obtaining the optimal policy (probability of correct selection) using data collected under different policies. The probability of correct selection is estimated using $10^3$ independent repetitions of the procedures and the estimation errors (half width of the 95\% confidence intervals) are around 0.01.
We observe that Q-OCBA outperforms the other policies in almost all cases. For example, when $r_L=1$ and $n=2000$, the probability of correct selection using Q-OCBA is $4$ times higher than that using PSRL, and 25\% higher than that using $\epsilon$-greedy. 
By carefully tuning the RE parameter $\pi(1|s)$, we find that $\pi(1|s)=0.8$ achieves comparable performance as Q-OCBA. However, when $\pi(1|s)$ is not optimally chosen, e.g., $\pi(1|s)=0.6$, the performance of RE is significantly worse than Q-OCBA. For example, when $r_L=1$ and $n=2\times 10^3$, RE(0.8) leads to a $95\%$ probability of correct selection, while RE(0.6) only gives a $72\%$ probability.
Note that there is no systematic way to tune the parameter of RE other than trial and error. In this sense Q-OCBA is more robust than RE.

We also observe that $\epsilon$-greedy and PSRL perform much worse when $r_L=3$ compared to when $r_L=1$.
One possible explanation is that for  $r_L=1$, the $(s,a)$ pairs that need to be explored more also tend to have larger Q-values. However, as $r_L$ increases,
the corresponding changes in the Q-values would change the exploration ``preference" of $\epsilon$-greedy and PSRL. 
On the other hand, as the underlying stochasticity of the system does
not change with $r_L$, the states that need more exploration remain unchanged.
Thus, there is a misalignment between the Q-values and the $(s,a)$ pairs that need more exploration. 
In contrast, the performance of Q-OCBA is very stable against different values of $r_L$.
The superiority of Q-OCBA in these experiments comes as no surprise to us. The benchmark policies like UCRL and PSRL are designed to minimize regret which involves balancing the exploration-exploitation trade-off. On the other hand, Q-OCBA focuses on efficient exploration only, i.e., our goal is to minimize the probability of incorrect policy selection. This is achieved by
carefully utilizing the variance information gathered from the previous stages, and is made possible by our derived asymptotic variance formulas. 

Lastly, we observe that randomness in reward contaminates the estimation. For example, when $r_L=1$ (Figure 2(a)), 
Q-OCBA is able to achieve a higher than $90\%$ probability of correct selection when the sample size is $10^3$. However, when $r_L\sim\mathcal N(1,10^2)$(Figure 2(c)), even with a sample size of $7\times 10^3$, the probability of correct selection under Q-OCBA is only around $80\%$. Nevertheless, even with a high variability in reward, Q-OCBA still outperforms other benchmark policies. For example, when $r_L\sim \mathcal{N}(1,10^2)$ and $n=7\times 10^3$, Q-OCBA achieves a 14\% higher probability of correct selection than the second best, which in this case is RE(0.6). 

\begin{figure}[htb]
	\caption{Probability of correct selection (PCS) with different sampling budgets and different values of $r_L$.} 
	\label{fig:seqm}\centering
	\subfloat[$r_L=1$]{
		\includegraphics[width=0.45\textwidth]{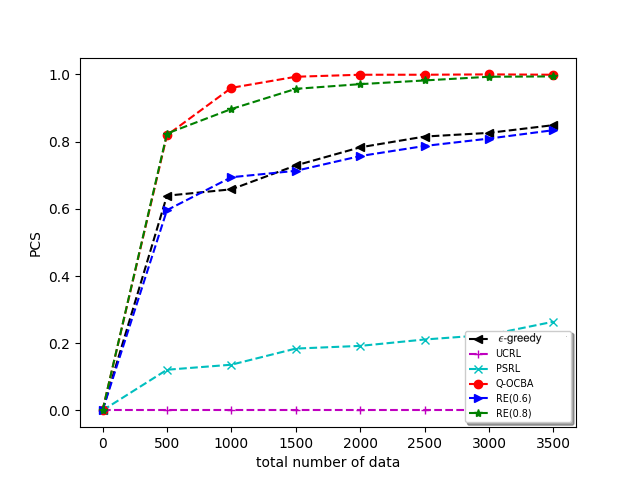}
	} 
	\subfloat[$r_L=3$]{
		\includegraphics[width=0.45\textwidth]{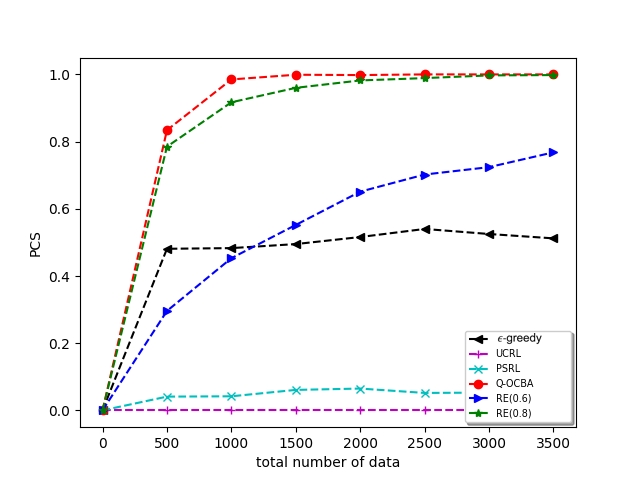}
	} \\
	\subfloat[$r_L\sim \mathcal{N}(1,10^2)$]{
		\includegraphics[width=0.45\textwidth]{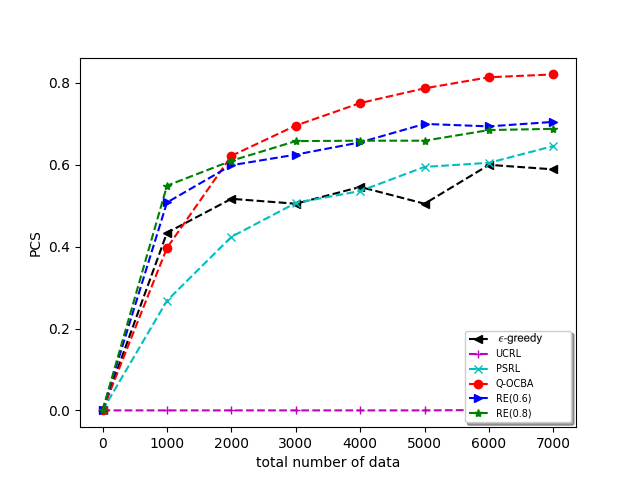}
	} 
	\subfloat[$r_L\sim \mathcal{N}(3, 10^2)$]{
		\includegraphics[width=0.45\textwidth]{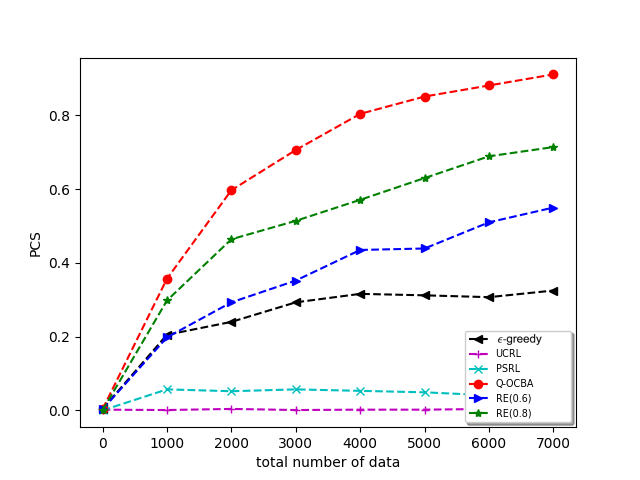}
	}
\end{figure}

In practice, when we have a two-stage implementation where we collect experience in the first stage and deploy the trained policy to collect reward in the second stage, we would care more about the regret from the trained policy than the probability of obtaining the optimal policy itself. 
Let $\hat{\pi}^{*}_n$ denote the estimated optimal policy trained offline using a sample of size $n$ collected via a proper exploration policy, e.g., Q-OCBA. In this case, we define the future regret as
\[R=\rho^T(V^*- \E[V^{\hat{\pi}^{*}_n})].\]
In particular, $R$ measures the average loss from deploying the estimated optimal policy in the second stage. 

In Figure \ref{fig:seqfrm}, we compare the future regret for $\hat{\pi}^{*}_n$'s trained based on data collected under different 
first-stage exploration policies using the examples as those in Figure \ref{fig:seqm}.
We observe that Q-OCBA still outperforms most benchmark policies.
For example, when $r_L=1$ and $n=2\times 10^3$, the regret of the policy trained using data collected under Q-OCBA is only $1/160$ of that using PSRL, and $1/30$ of that using $\epsilon$-greedy.
Even though Q-OCBA is not designed to minimize $R$, it is intuitive that the probability of selecting the optimal policy is positively corrected with
$V^{\hat{\pi}^{*}_n}$, which matches our empirical observations.

\begin{figure}[htb]
	\caption{Future regret of Q-OCBA vs benchmarks as $n$ increases} 
	\label{fig:seqfrm}\centering
	\subfloat[$r_L=1$]{
		\includegraphics[width=0.5\textwidth]{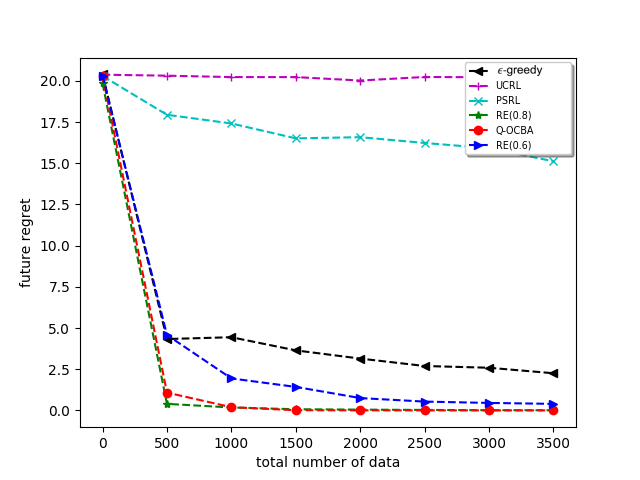}
	} 
	\subfloat[$r_L=3$]{
		\includegraphics[width=0.5\textwidth]{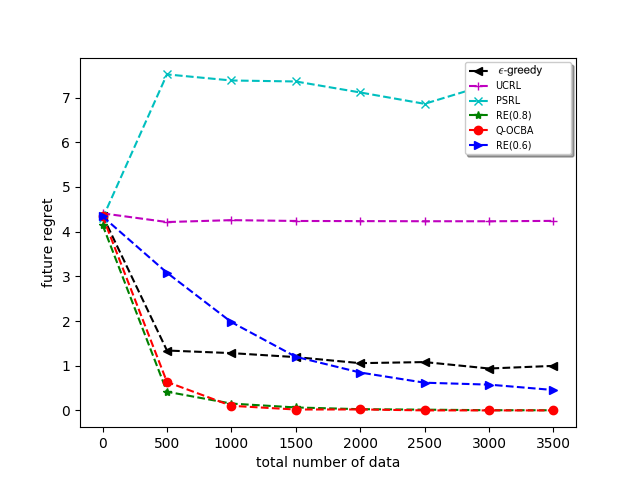}
	}\\
	\subfloat[$r_L\sim \mathcal{N}(1, 10^2)$]{
		\includegraphics[width=0.5\textwidth]{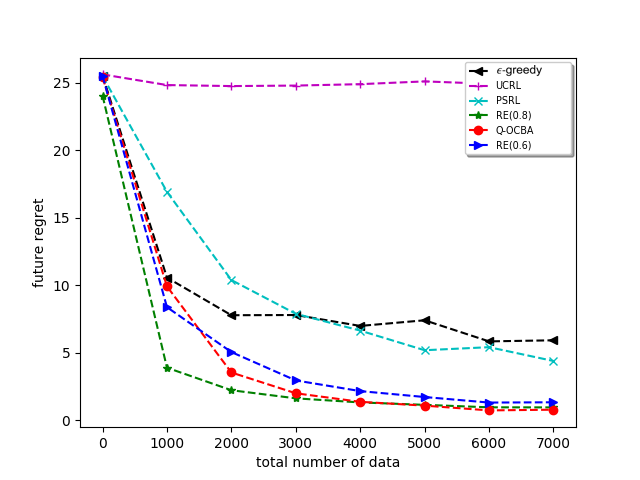}
	} 
	\subfloat[$r_L\sim \mathcal{N}(3, 10^2)$]{
		\includegraphics[width=0.5\textwidth]{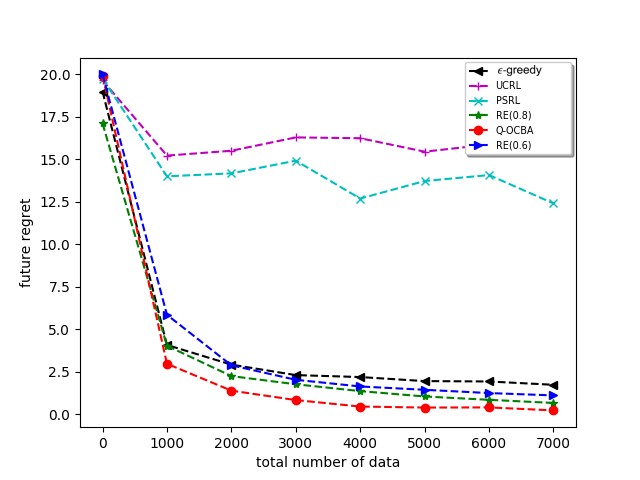}
	}
\end{figure}

Next, we vary the value of $m_s$ to test the performance of the algorithm for large state spaces. In particular, we fix $n=10^5$ and assume $r_L\sim \mathcal{N}(2, 1)$. For the sequential update (for Q-OCBA, PSRL, and $\epsilon$-greedy), we set $K=10$. For the initialization, we sample $\hat{P}_{B_0}$, $\hat{\mu}_{R, B_0}$ and $\hat{\sigma}^2_{R, B_0}$ from Uniform$[0,1]$ ($\hat{P}_{B_0}$ needs further normalization to be a valid transition matrix).

Figure \ref{fig:seqm_ls} compares the performance (probability of correct selection and future regret) under different data collecting polices for different values of $m_s$. We also calculate the performance of Q-OCBA with known system parameters (i.e., $h_{ij}$'s defined in \eqref{eq:hij} can be evaluated exactly). This oracle policy is referred to as Q-OCBA$_\text{known}$.

We observe from Figure \ref{fig:seqm_ls} that with a fixed sampling budget, the performance of all policies deteriorates as $m_s$ increases. This is because the problem is more difficult to learn as the state space grows larger. Q-OCBA$_\text{known}$ performs the best among all policies tested. Even when $m_s=55$, it still achieves a $52\%$ probability of correct selection. Q-OCBA consistently outperforms the other benchmark policies. For example, when $m_s=45$, Q-OCBA (with estimated parameters) achieves a $25\%$ probability of correct selection, while the best benchmark policy, which in this case is RE(0.8), only has an $11\%$ probability of correct selection. We also observe that the performance of UCRL, PSRL, and $\epsilon$-greedy deteriorate very quickly as $m_s$ increases. When $m_s\geq 15$, their probability of correct selection is almost zero. This suggests that with a limited sampling budget, when the state space is large, it is very important to optimize the exploration policy to collect  informative data for policy training. In this regard, Q-OCBA shows the potential to outperform in this example.



\begin{figure}[htb]
	\caption{Probability of correct selection (PCS) and future regret with different $m_s$ } 
	\label{fig:seqm_ls}\centering
	\subfloat[]{
		\includegraphics[width=0.5\textwidth]{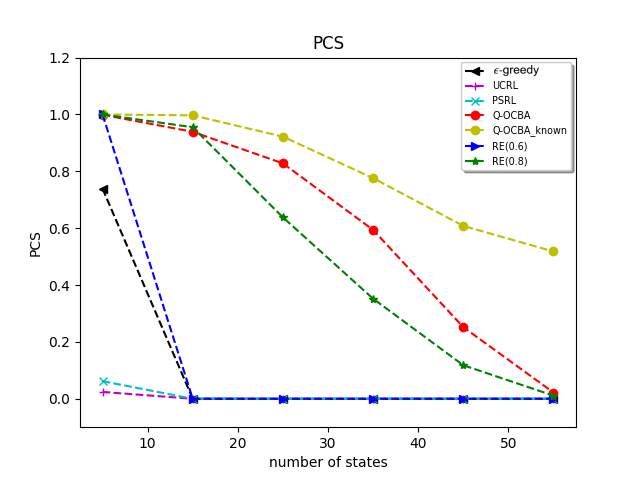}
	} 
	\subfloat[]{
		\includegraphics[width=0.5\textwidth]{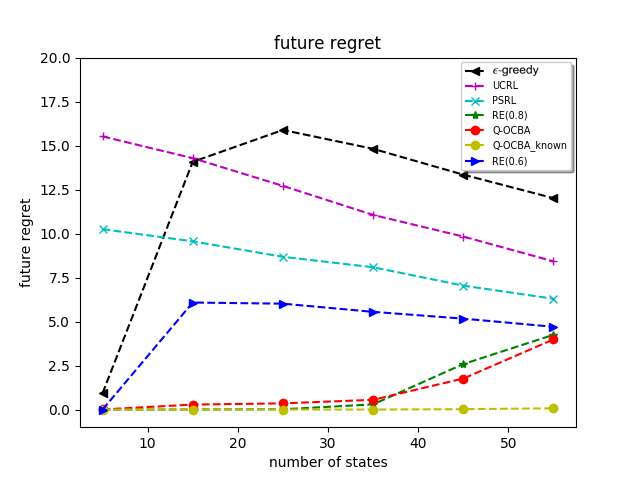}
	} \\
\end{figure}

\subsubsection{Tuning Q-OCBA} \label{sec:sen}

Q-OCBA requires very minimum tuning. Given a sampling budget, we need to specify the number of iterations $K$ and the budget for each iteration.
In addition, we need to provide some reasonable initializations for $\hat P(\cdot|s,a)$ and $\hat \mu_R(s,a)$. 

We note from our experiments that the performance of Q-OCBA is quite robust to different choices of initializations. 
This robustness stems from the updating criterion \eqref{eq:optq}. In particular, \eqref{eq:optq} leads to stationary distributions that are reasonably far from zero for all the state-action pairs. This stationary measure induces efficient exploration even when the initializations deviate substantially from the true values. In Table \ref{tab:PCS_comparison_seq_QOCBA}, we report the probability of correct selection under different initialization for RiverSwim with $m_s=6$ for different values of $r_L$. We set the initialization for $\hat P(\cdot|s,a)$ as uniform, 
i.e., for $1\leq s\leq 4$, $p^0(s^{\prime}|s,a)=1/3$ for $s^{\prime}\in\{s, s+1, s-1\}$, 
for $s=0$, $p^0(0|0,a)=1/2$ and $\hat P(1|0,a)=1/2$, and for $s=5$, $p^0(5|5,a)=1/2$ and $p^0(4|5,a)=1/2$.
We vary the initializations for $\hat \mu_R(s,a)$, for all $(s,a)\in\mathcal{S}\times\mathcal{A}$, from $0$ to $100$.
The sampling budget is $n=10^3$, the number of iterations is $K=10$, and the sampling budget is equally distributed among the $10$ iterations.
We observe from Table \ref{tab:PCS_comparison_seq_QOCBA} that across all problem instances and initializations, Q-OCBA achieves a very high probability of correct selection. Specifically, even in the extreme case where we set $u_R^0(s,a)=100$, when the true $r_L=1$, the probability of correct selection is $0.97$. We also note that the probability of correct selection is not monotone in the value of $r_L$, with $r_L=2$ leading to the lowest probability of correct selection.

\begin{table}[tbh]
	\caption{Probability of correct selection using Q-OCBA under different initializations, $n = 10^3$ and $K=10$} \label{tab:PCS_comparison_seq_QOCBA}
	\centering{
		\begin{tabular}{c|c|c|c|c}\hline
			$r_L$	 &  
			$u_R^0(s,a)=0$ & $u_R^0(s,a)=1$
			& $u_R^0=10$ & $u_R^0(s,a)=100$ 
			\\ \hline
			$1$ &
			$0.98$ & $0.95$ & $0.96$ & $0.97$ 
			\\ 
			
			$2$ 
			& $0.91
			$ &  $0.91$ 
			& $0.87$ 
			& $0.84$ 
			 \\ 
			
			$3$ 
			& $0.98$
			&  $0.97$ 
			& $0.99$ 
			& $0.98$ 
			\\ \hline
			
		\end{tabular}}
	\end{table}

We next study the choice of $K$ -- the number of iterations, which determines how often the exploration policy is updated during data collection.
Figure \ref{fig:num_stage_QOCBA} compares the probability of correct selection for different values of $K$ based on RiverSwim with $m_s=6$.
To focus on the effect of $K$,  the sampling budget is equally distributed among the $K$ iterations. We observe that as $K$ increases, the probability of correct selection under Q-OCBA increases. Specifically, with $10^4$ samples, when $K=2$, the probability of correct selection is only around $18\% - 25\%$; when $K=10$, the probability of correct selection increases to above $85\%$ in all cases, with the probability almost equal to 1 for $r_L=1$ or $3$.
However, there is a diminishing benefit of increasing $K$. For example, when increasing $K$ from 2 to 4, the probability of correct selection increases by more than $0.5$. In contrast, when increasing $K$ from 6 to 8, the probability only increases by only $0.03$.
We also note that when the sampling budget $n$ is small or when there is a lot of uncertainty in $r_L$, we benefit more from having a larger value of $K$. In particular, when comparing Figure \ref{fig:num_stage_QOCBA}(a) to \ref{fig:num_stage_QOCBA}(c), $K=4$ leads to a probability of correct selection of $0.82$ when $n=10^3$ versus $0.98$ when $n=10^4$. When $n=10^3$, to achieve a higher than $0.9$ probability of correct selection, we need $K\geq 6$.
As a general rule of thumb, we suggest setting $K$ between $6$ and $10$, as updating the policy at each iteration, i.e., solving \eqref{eq:optq}, incurs some computational cost.  

\begin{figure}[htb]
	\caption{Number of stages against probability of correct selection (PCS)} 
	\label{fig:num_stage_QOCBA}\centering
	\subfloat[$r_L=1, n=10^3$]{
		\includegraphics[width=0.45\textwidth]{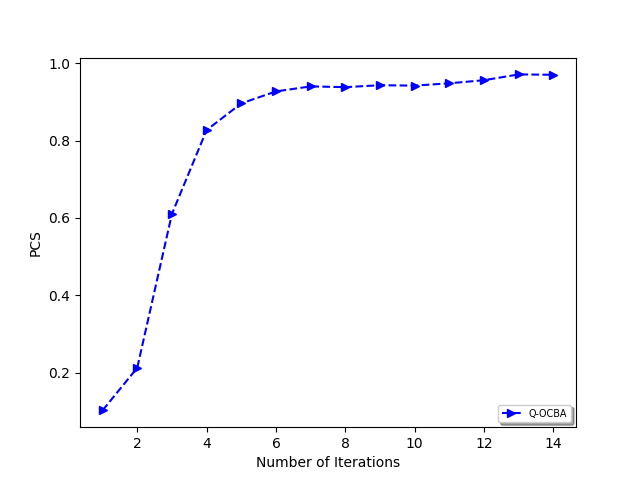}
	} 
	\subfloat[$r_L=3, n=10^3$]{
		\includegraphics[width=0.45\textwidth]{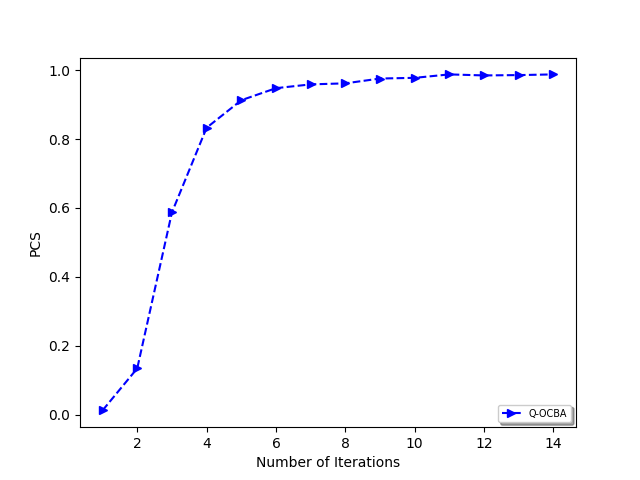}
	}\\
	
	\subfloat[$r_L=1, n=10^4$]{
		\includegraphics[width=0.45\textwidth]{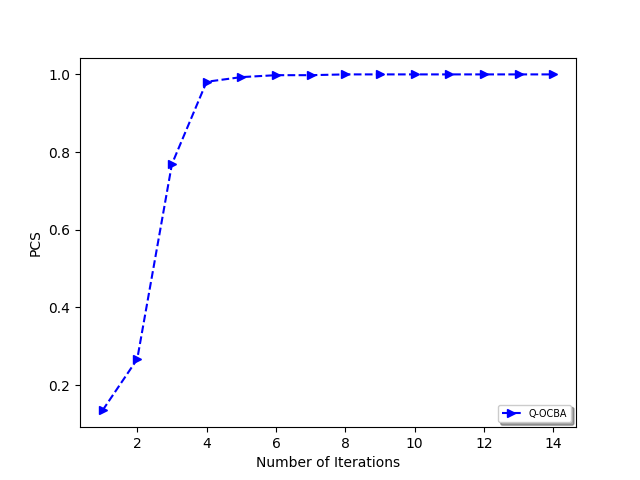}
	} 
	\subfloat[$r_L=3, n=10^4$]{
		\includegraphics[width=0.45\textwidth]{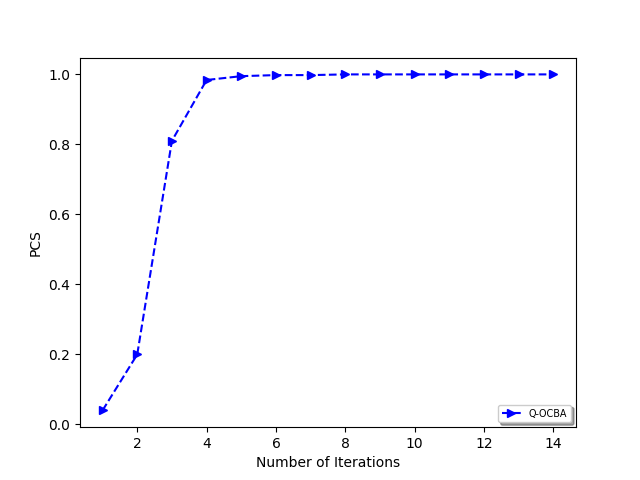}
	}\\
	\subfloat[$r_L\sim N(1,10^2), n=10^4$]{
		\includegraphics[width=0.45\textwidth]{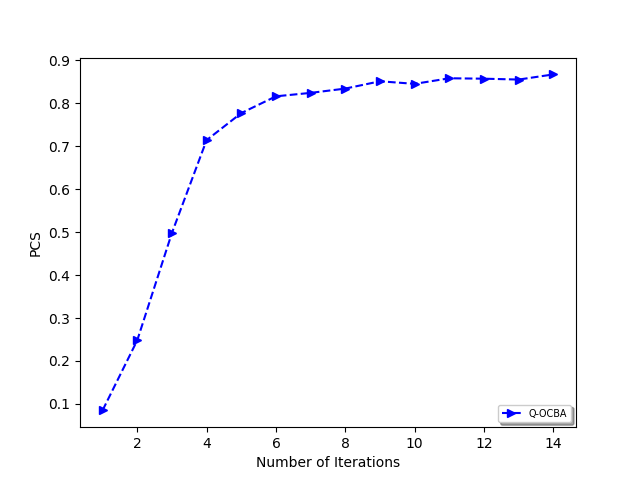}
	} 
	\subfloat[$r_L\sim N(3,10^2), n=10^4$]{
		\includegraphics[width=0.45\textwidth]{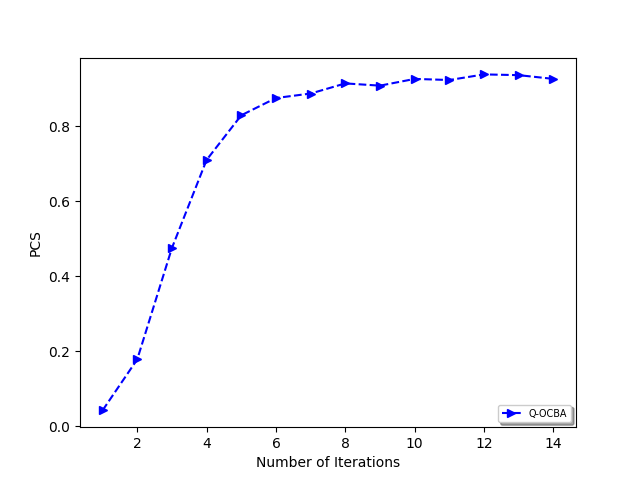}
	}~~~ \\

\end{figure}

\section{Concluding Remarks} \label{sec:con}
In this paper, we investigate the large-sample behaviors in the
estimation of Q-values and related quantities in RL. We establish the corresponding
central limit theorem with an explicit  characterization of the asymptotic variance. Utilizing the asymptotic variance obtained, we further develop a pure exploration policy, Q-OCBA, that maximizes the worst-case relative discrepancy among the estimated Q-values relative to the optimal action. In our numerical experiments, we compare Q-OCBA with various benchmark policies in terms of the probability of correctly selecting the optimal policy and the future regret. We observe that Q-OCBA outperforms benchmark methods according to these pure exploration criteria.
Our work is among the few that investigate statistical uncertainty quantification for RL and develop policies for pure exploration purposes. 

There are several limitations of our current developments that would lead to interesting future research directions.
First, compared to existing finite-sample bounds for RL estimates, our asymptotics result in tighter bounds and explicit characterization of the constants (i.e., the asymptotic variances). However, uncertainty quantification based on asymptotics also has its limitations. With a finite sample size, we may not be able to achieve the desired confidence level and it often requires trial and error to decide how many samples are required to achieve close-to-asymptotic performance.

Second, our exploration policy requires estimating the transition matrix and the reward distribution for each state-action pair. This may need a large amount of data and can be computationally expensive when the state space and/or action space is large. We consider two approximation schemes: nearest neighbors and kernel representation. However, in many large-scale RL problems, it is highly non-trivial to find a good approximation or lower-dimensional representation of the underlying MDP. How to find good approximations for large-scale problems and develop efficient exploration policies accordingly require substantial future developments. We believe our framework provides a promising starting point.

Lastly, the relative discrepancy maximization criteria in Q-OCBA has its own limitations. The mean-variance trade-off in OCBA could lead to misleading results in certain low-confidence scenarios (see, for example, \cite{shin2016tractable,peng2017gradient}). Other criteria for information gaining such as knowledge gradient \citep{ryzhov2012knowledge} or posterior probability of correct selection \citep{peng2018ranking} may be employed to achieve better performance. However, how to adapt these criteria to the RL setting requires further investigation.
In addition, there is in general a lack of finite-sample performance bounds for OCBA algorithms. Developing meaningful performance guarantees when dealing with a relatively small sample size would also be an interesting research direction. This can provide further guidance on how to fine-tune the algorithm parameters.

\begin{APPENDICES}

\section{Proofs of the results in Section \ref{sec:stand}}

In all the subsequent proofs, we shall treat $P$ and $\hat P_n$ as $N m_s$-dimensional vectors following the index rule:
$P((i-1)N+(j-1)m_s+k) = P(k|i,j)$, $\hat P_n((i-1)N+(j-1)m_s+k) = P_n(k|i,j)$

\subsection{Proof of Theorem \ref{thm:clt_basic}}
The proof of Theorem \ref{thm:clt_basic} is based on the delta method and the implicit function theorem.

We first note that under Assumptions \ref{assum:ind} and \ref{assum:postive_stat}, we have
	\begin{equation}\label{eq:r_p_lln}
	[\hat{\mu}_{R,n}, \hat{P}_n] \rightarrow [\mu_R,P]  \mbox{ a.s.}
	\end{equation}
and
\[\frac{1}{n}\sum_{1\leq t\leq n} \mathds{1}(s_t=i, a_t=j) \rightarrow w((i-1)m_a + j) \mbox{ a.s. as $n\rightarrow\infty$}.\]
By Slutsky's theorem and Proposition 3 in \citep{trevezas2009variance}, 

	\begin{equation}\label{eq:r_p_clt1} 
	\sqrt{n}([\hat{\mu}_{R,n}, \hat{P}_n] - [\mu_{R},P]) \Rightarrow \mathcal{N}(0, \Sigma_{R,P}),
	\end{equation} 
	where $\Sigma_{R,P}$ is defined in \eqref{eq:Dp}
	and is an $(N+Nm_s)\times (N+Nm_s)$ matrix. 

	Second, define $F(Q',r',P')$ as a mapping from $\mathbb{R}^{N}\times \mathbb{R}^{N}\times \mathbb{R}^{N m_s}$ to $\mathbb{R}^{N}$, representing the fixed point equation for the Bellman operator \eqref{eq:bellman}:
	\begin{eqnarray*}
		F(Q',r',P')((i-1)m_a+j) 
		&=& Q'((i-1)m_a+j) -r'((i-1)m_a+j)\\
		&&-\gamma \sum_{1\leq k\leq m_s}P'((i-1)N +(j-1)m_s+k) ~ g_k(Q')
	\end{eqnarray*}  for $1\leq i \leq m_s$ and $1\leq j \leq m_a$, where $g_k(Q') = \max_{1\leq l\leq m_a} Q'((k-1)m_a+l)$, for $1\leq k \leq m_s$.
	
	Under Assumption \ref{assum:unique},
	there exists an open neighborhood of $Q$, which we denote as $\Omega$, 
	such that for any $Q' \in \Omega$, $\arg\max_{1\leq l\leq m_a} Q'((k-1)m_a +l)$ is still unique for each $1 \leq k \leq m_s$. Then, $g_k(Q')$ has all its partial derivatives exist and continuous. This implies that $F(Q',r',P')$ is continuously differentiable in $\Omega \times  \mathbb{R}^{N}\times \mathbb{R}^{N m_s} $.
	
	Denote the partial derivatives of $F$ as 
	\[\frac{\partial F}{\partial (Q',r' , P')} = \left[ \frac{\partial F}{\partial Q'} , \frac{\partial F}{\partial r'} , \frac{\partial F}{\partial  P'} \right]. \] 
	Note that $\frac{\partial F}{\partial Q'}$ is an $N\times N$ matrix. Denote its element at the $((i-1)m_a+j)$-th row, $((k-1)m_a+l)$-th column by 
	\begin{eqnarray*}
		\frac{\partial F_{(i-1)m_a+j}}{\partial Q'_{(k-1)m_a+l}} 
		&=& 
		\mathds{1}\left(i=k,j=l \right)-\gamma P'((i-1)N+(j-1)m_s + k)\\
		&& \times \mathds{1}\left(Q'((k-1)m_a+l) = \max_{1\leq u\leq m_a} Q'((k-1)m_a+u)\right).
	\end{eqnarray*} 
	Define $\tilde{P}'$ is an $N\times N$ matrix with 
	\begin{eqnarray*}
	&&\tilde{P}'((i-1)m_a+j,(k-1)m_a+l)\\ 
	&=&  P'((i-1)N+(j-1)m_s + k)\mathds{1}\left(Q'((k-1)m_a+l)= \max_{1\leq u\leq m_a} Q'((k-1)m_a+u)\right)
	\end{eqnarray*}
	for $1\leq i\leq m_s$, $1\leq j\leq m_a$,$1\leq k\leq m_s$, and $1\leq l\leq m_a$.
	Then we have
	\[\frac{\partial F}{\partial Q'} = I - \gamma \tilde{P}'.\] 
	
	Since all rows of $\tilde{P}'$ sum up to one,
	$\tilde{P}'$ can be interpreted as the transition matrix of a Markov chain with state space $\{(i, j): 1\leq i \leq m_s, 1\leq j \leq m_a \}$. 
	Then, $\frac{\partial F}{\partial Q'}$ is invertible for any $Q' \in \Omega$.
	%
	%
	This allows us to apply the implicit function theorem to the equation $F(Q,\mu_R, P) = 0$. 
	In particular, there exists an open set $U\subset\mathbb{R}^{N} \times \mathbb{R}^{N m_s}$, around $\mu_R\times P$, and a unique continuously differentiable function $\phi$: $U\rightarrow \mathbb{R}^{N}$, such that $\phi(\mu_R,P) = Q$, and for any $r'\times P'\in U$
	\[ F(\phi(r', P'), r', P') =0. \] 
	In addition, the partial derivatives of $\phi$  satisfy
	\[ \nabla \phi(\mu_R,P):=\left. \frac{\partial \phi}{\partial (r',P')}\right |_{{r'=\mu_R}, {P'=P}} = \left. -\left[\frac{\partial F}{\partial Q'}\right]^{-1} \left[ \frac{\partial F}{\partial r'}, \frac{\partial F}{\partial P'}\right] \right |_{{Q'=Q},{r'= \mu_R}, {P'=P}}.\]
For $\frac{\partial F}{\partial r'}$, we have
	\[\left. \frac{\partial F}{\partial r'} \right|_{{Q'=Q},{r'=\mu_R}, {P'=P}} = -I_{N \times N}.\]
%
	For $\frac{\partial F}{\partial P'}$, because
	\begin{eqnarray*}
		\left.\frac{\partial F_{(i-1)m_a+j}}{\partial P'_{(k-1)N+(l-1)m_s+v}} \right|_{{Q'=Q},{r'=\mu_R}, {P'=P}}
		&=& 
		-\gamma  \max_{0\leq u\leq m_a} Q((v-1)m_a+u) \mathds{1}\left(k=i, j=l \right) \\
		&=& -\gamma V^{*}(v) \mathds{1}\left(k=i, j=l \right), 
	\end{eqnarray*}
	for $1\leq i\leq m_s$, $1\leq j\leq m_a$,$1\leq k\leq m_s$,$1\leq l\leq m_a$, and $1\leq v\leq m_s$, 
	we can write
	\[\left. \frac{\partial F}{\partial P'}\right|_{{Q'=Q},{r'=\mu_R}, {P'=P}} = -\gamma \begin{pmatrix} (V^{*})^T  & & & & \\
	& \ddots & & &\\
	& & (V^{*})^T & &\\
	& & & \ddots & \\ & & & & (V^{*})^T
	\end{pmatrix},\]
	which is an $N\times Nm_s$ matrix.
	
	Now, from \eqref{eq:r_p_lln},
	by continuous mapping theorem, we have
	\[\phi(\hat{\mu}_{R,n}, \hat{P}_n) - \phi(\mu_R,P)\rightarrow 0 \mbox{ a.s. as $n\rightarrow\infty$},\] 
	which implies that $\hat{Q}_n \rightarrow Q$ a.s..
	
	From \eqref{eq:r_p_clt1}, using the delta method, we have
	\[	\sqrt{n}(\hat{Q}_n- Q)
		= \sqrt{n}(\phi(\hat{\mu}_{R,n}, \hat{P}_n) - \phi(\mu_R,P))
		\Rightarrow \mathcal{N}(0, \nabla \phi(\mu_R,P) \Sigma_{R,P} \nabla  (\phi(\mu_R,P))^T)\mbox{ as $n\to\infty$}.\]
	Plugging in the formula for $\nabla \phi(\mu_R,P)$ and $\Sigma_{R,P}$ derived above, we have
	\[\nabla \phi(\mu_R,P) \Sigma_{R,P} \nabla  \phi(\mu_R,P)^T=(I-\gamma \tilde{P}^{\pi^{*}})^{-1}W^{-1}[D_R + \gamma^2D_Q] ((I-\gamma \tilde{P}^{\pi^{*}})^{-1})^T.\]
\Halmos

\subsection{Proof of Corollary \ref{cor:V_R_basic}}
	Define $g_V(Q)$: $\mathbb{R}^{N} \to \mathbb{R}^{m_s}$ as 
	\[g_V(Q) = (g_1(Q), \dots, g_{m_s}(Q))=(V^{\pi^{*}}(1), \dots, V^{\pi^{*}}(m_s)),\] 
	which is continuously differentiable in an open neighborhood of $Q$. 
	Then we can apply the delta method to get
	\[\sqrt{n} (\hat{V}^{*}_n - V^{\pi^{*}})= \sqrt{n}(g_V(\hat{Q}_n) - g_V(Q)) \Rightarrow \mathcal{N}(0,\nabla g_V(Q) \Sigma (\nabla g_V(Q))^T) \mbox{ as $n\to\infty$}. \]
	Note that $\nabla g_V(Q)$ is a $m_s \times N$ matrix with $ \nabla g_V(Q)(i, (j-1)m_a+k)=  \mathds{1} \left(i=j,k=a^{*}(i)\right)$.
	
	To see where the explicit expression for $\Sigma_V$ is from, we will first rearrange the indexes such that we can write 
	\[\tilde{P}^{\pi^{*}} = \begin{pmatrix} P^{\pi^{*}} & {\bf 0} \\  $\#$ & {\bf 0} \end{pmatrix},\] 
	where we use $\#$ as a generic placeholder for quantitities that we do not need to characterize explicitly.
	Using this new indexing, we can write $\nabla g_V(Q) = [I, {\bf 0}]$, and
	\[
	D_R =\begin{pmatrix} D_R^{\pi^*} & {\bf 0} \\  {\bf 0} & \# \end{pmatrix}, 
	W=\begin{pmatrix} W^{\pi^{*}} & {\bf 0} \\ {\bf 0} & \# \end{pmatrix},
	D_Q =\begin{pmatrix} D_V^{\pi^{*}}& {\bf 0} \\ {\bf 0} & \# \end{pmatrix}.
	\]
	Then
	\begin{eqnarray*}
		\nabla g_V(Q) (I-\gamma \tilde{P}^{\pi^{*}})^{-1}
		&=& [I, {\bf 0}] \sum^{\infty}_{i=0} \gamma^i (\tilde{P}^{\pi^{*}})^i\\	
		&=& [I, {\bf 0}] \sum^{\infty}_{i=0} \gamma^i \begin{pmatrix} (P^{\pi^{*}})^i & {\bf 0} \\ \# & {\bf 0} \end{pmatrix} \\	
		&=& \sum^{\infty}_{i=0} \gamma^i \left[ (P^{\pi^{*}})^i, {\bf 0} \right]
		=	\left[(I-\gamma P^{\pi^{*}})^{-1}, {\bf 0} \right].	
	\end{eqnarray*}
	And thus,
	\begin{eqnarray*}
		&&\nabla g_V(Q) \Sigma (\nabla g_V(Q))^T\\
		&=&  \left[(I-\gamma P^{\pi^{*}})^{-1}, {\bf 0} \right] \begin{pmatrix} W^{\pi^{*}}& {\bf 0} \\ {\bf 0} & \# \end{pmatrix}^{-1}\left[\begin{pmatrix} D_R^{\pi^*} & {\bf 0} \\ {\bf 0} & \# \end{pmatrix} + \begin{pmatrix} D_V^{\pi^{*}}& {\bf 0} \\ {\bf 0} & \# \end{pmatrix}\right]	\left[(I-\gamma P^{\pi^{*}})^{-1}, {\bf 0} \right]^T \\
		&=& (I-\gamma P^{\pi^{*}})^{-1}(W^{\pi^{*}})^{-1}[D_R^{\pi^{*}} +  D_V^{\pi^{*}}]((I-\gamma P^{\pi^{*}})^{-1})^T.
	\end{eqnarray*}
	
	Lastly, the asymptotic normality of $\hat{\chi}^{\pi^{*}}_n$ follows from the delta method as well. \Halmos


\subsection{Proof of Corollary \ref{cor:V_basic_pi}}
The convergence results for $\hat{\mu}_{R,n}$ and $\hat{P}_n$ still hold in this case.
	Similar to the proof of Theorem \ref{thm:clt_basic}, we define $F^{\tilde{\pi}}$ as a mapping $\mathbb{R}^{m_s}\times \mathbb{R}^{N}\times \mathbb{R}^{Nm_s} \rightarrow \mathbb{R}^{m_s} $ to represent the corresponding fixed point equation, i.e.,
	\[F^{\tilde{\pi}} (V', r', P')(s) = V'(s) - \sum_{1\leq a\leq m_a} r'(s,a) \tilde{\pi}(a|s) - \gamma \sum_{1\leq a\leq m_a} \tilde{\pi}(a|s) \sum_{1\leq s' \leq m_s} P'(s'|s,a) V'(s').\] 
	Note that
	$F^{\tilde{\pi}} (V^{\tilde{\pi}}, \mu_R, P) = 0$, $F^{\tilde{\pi}}$ is continuously differentiable  and $I-\gamma P^{\tilde{\pi}}$ is invertible. We can thus apply the implicit function theorem. In particular, there exists an open set $U^{\tilde{\pi}}$ around $\mu_R\times P\in \mathbb{R}^{N} \times \mathbb{R}^{N m_s}$, and a unique continuously differentiable function $\phi^{\tilde{\pi}}$: $U^{\tilde{\pi}}\rightarrow \mathbb{R}^{N}$, such that $\phi^{\tilde{\pi}}(\mu_R,P) = V^{\tilde{\pi}}$ and for any  $ r'\times P'\in U^{\tilde{\pi}}$,
	\[ F^{\tilde{\pi}}(\phi^{\tilde{\pi}}(r', P'), r', P') =0. \]  
	In addition, the partial derivatives of $\phi^{\tilde{\pi}}$ satisfies
	\[ \left. \nabla \phi^{\tilde{\pi}}(\mu_R,P) = \frac{\partial \phi^{\tilde{\pi}}}{\partial (r',P')}\right |_{{r'=\mu_R}, {P'=P}} = \left. -\left[\frac{\partial F^{\tilde{\pi}}}{\partial V'} \right]^{-1} \left[ \frac{\partial F^{\tilde{\pi}}}{\partial r'}, \frac{\partial F^{\tilde{\pi}}}{\partial P'}\right] \right |_{{V'=V^{\tilde{\pi}}},{r'=\mu_R}, {P'=P}} \]
	where
	\[\left. \frac{\partial F^{\tilde{\pi}}}{\partial V'} \right|_{{V'=V^{\tilde{\pi}}},{r'=\mu_R}, {P'=P}}  = I-\gamma P^{\tilde{\pi}},\]\label{partialV}
	\[\left. \frac{\partial F^{\tilde{\pi}}}{\partial r'} \right|_{{V'=V^{\tilde{\pi}}},{r'=\mu_R}, {P'=P}} = - \begin{pmatrix} \tilde{\pi}(\cdot|1)^T  & & & & \\
	& \ddots & & &\\
	& & \tilde{\pi}(\cdot|i)^T & &\\
	& & & \ddots & \\ & & & & \tilde{\pi}(\cdot|m_s)^T \\
	\end{pmatrix},\]\label{partialr}
	where $\tilde{\pi}(\cdot|i)^T = [\tilde{\pi}(1|i), \dots, \tilde{\pi}(j|i), \dots, \tilde{\pi}(m_a|i)] $
	
	and
	\[\left. \frac{\partial F^{\tilde{\pi}}}{\partial P'} \right|_{{V'=V^{\tilde{\pi}}},{r'=\mu_R}, {P'=P}} = - \begin{pmatrix} (q^{\tilde{\pi}}_1)^T  & & & & \\
	& \ddots & & &\\
	& & (q^{\tilde{\pi}}_i)^T & &\\
	& & & \ddots & \\ & & & & (q^{\tilde{\pi}}_{m_s})^T \\
	\end{pmatrix},\]
	\label{partialP}
	where 
	$(q^{\tilde{\pi}}_i)^T =  \gamma [\tilde{\pi}(1|i)(V^{\tilde{\pi}})^T,\dots \tilde{\pi}(j|i)(V^{\tilde{\pi}})^T,\dots \tilde{\pi}(m_a|i)(V^{\tilde{\pi}})^T]$, 
	which is an $N$-dimensional vector.
	
	Applying the delta method, we have
	\begin{eqnarray*}
		\sqrt{n}(\hat{V}^{\tilde{\pi}}_n- V^{\tilde{\pi}})
		&=& \sqrt{n}(\phi^{\tilde{\pi}}(\hat{\mu}_{R,n}, \hat{P}_n) - \phi^{\tilde{\pi}}(\mu_R,P))\\
		&\Rightarrow& \mathcal{N}\left(0, \nabla \phi^{\tilde{\pi}}(\mu_R,P) \Sigma_{R,P} \left(\nabla \phi^{\tilde{\pi}}(\mu_R,P)\right)^T\right) \mbox{ as $n\to\infty$,}
	\end{eqnarray*}
	where 
	\[\nabla \phi^{\tilde{\pi}}(\mu_R,P) \Sigma_{R,P} \nabla \phi^{\tilde{\pi}}(\mu_R,P)^T=(I-\gamma P^{\tilde{\pi}})^{-1}  D^{\tilde{\pi}} \left((I-\gamma P^{\tilde{\pi}})^{-1} \right)^T.\]

\Halmos

\subsection{Proof of Theorem \ref{thm:non-unique}} \label{app:thm2}
To analyze MDPs with non-unique optimal policies, we utilize the LP representation of the MDP. Consider $\rho$ with $\rho(i)>0$, for $i=1,\dots, m_s$, the optimal value function $V^{*}$ is the optimal solution of of the following LP.	
	\begin{equation*}
	\begin{array}{ll}
	\min&\sum_{s}\rho(s) V(s)\\
	\text{subject to}& V(s) \geq \mu_R(s,a)+ \gamma \sum_{s'\in S} P(s'|s,a) V(s'),\ \forall s,a\\
	\end{array}\label{LP_prime}
	\end{equation*}	 
The dual of the LP takes the form
	\begin{equation*}
	\begin{array}{ll}
	\max&\sum_{s,a}\mu_R(s,a)x_{s,a}\\
	\text{subject to}
	&\sum_ax_{s,a}-\gamma\sum_{s',a}P(s|s',a)x_{s',a}=\rho(s),\ \forall s\\
	&x_{s,a}\geq0,\ \forall s,a
	\end{array}\label{LP_dual}
	\end{equation*}
where the decision variables, $x_{s,a}$'s, are known as the occupancy measure of the MDP \citep{puterman2014markov}.
If the MDP has more than one optimal policies, the dual problem has more than one optimal solutions, which further implies that the primal problem is degenerate. The degeneracy of the LP means that some constraints are redundant at the primal optimal solution, i.e., the optimal solution is at the intersection of more than $m_s$ hyperplanes.  Since the rows of the primal LP are linearly independent, in this case, there are multiple choices for the set of basic variables at the optimal solution. In addition, for any choice of $\rho$ with positive entries, $V^*$ is always the primal optimal solution \citep{puterman2014markov}. We denote $v_k$ as the $k$-th optimal basis and $A_k$ the corresponding column vectors, $1\leq k\leq \bar K$, for some $\bar K>1$.

For any $u\in U$, we denote the directional Jacobian of $V$ with respect to $P$ and $\mu_R$ as $D_u(P,\mu_R)$.
We next show that the directional Jacobian is well defined. 
Vectorize the matrix $(I-\gamma P)$ row by row and denote the corresponding $m_sN$-dimensional vector as $\Gamma_P$. 
We write $u=(u_P, u_R)\in \mathbb{R}^{m_sN+N}$ where $u_P\in \mathbb{R}^{m_sN}$ and $u_R\in \mathbb{R}^{N}$.
We also define the mapping from $\Gamma_P, \mu_R$ to the $k$-th optimal basis as $\psi_k$, $1\leq k\leq K$. In particular,
$v_k=\psi_k(\Gamma_P,\mu_R)=A_k^{-1}\mu_R$. We first note that $\psi_k(\Gamma_P+tu_P, \mu_R+tu_R)$ is continuous in $t$.
Next, fix $u\in U$, and consider the perturbed LP with constraint coefficients $\Gamma_P+tu_P$ and $\mu_R+tu_R$.
Due to the continuity of $\psi_k$'s,  there exists $t_u>0$, such that at least one basis is still optimal for all $t\in(0,t_u)$. 
In particular, there exists $k_u$, $1\leq k_u\leq \bar K$, such that $\psi_{k_u}(\Gamma_P+tu_P,\mu_R+tu_R)$ is the 
optimal solution to the perturbed LP for any $t\in(0,t_u)$. 
Lastly, we note that the directional derivative of $\psi_{k_u}$ at $(\Gamma_P,\mu_R)$ 
takes the form $u^T G_{k_u}$, where 
\[G_{k_u}=\left.\frac{\partial \psi_{k_u}}{\partial(\Gamma_P', \mu_R')}\right|_{\Gamma_P'=\Gamma_P, \mu_R'=\mu_R}.\]
This implies that $D_u(P,\mu_R) = u^T G_{k_u}$ is well defined.

Because there are $\bar K$ optimal basis, there are $K\leq \bar K$ distinct $G_k$'s.
Note that  we allow $K<\bar K$ since some $G_k$'s may be equal. 
With a little abuse of notation (which might involve rearranging the index), 
we write $G_1, \dots, G_K$ as the distinct $G_k$'s. 
Then, we can partition the set $U$ into $K$ subsets, $U_1, \dots, U_{K}$, 
such that if $u\in U_k$, $D_u(P,\mu_R)=u^T G_k $.

Lastly, define $\hat{u}_n = (\hat{P}_n-P, \hat{\mu}_{R,n}-\mu_R)/ \sqrt{||\hat{P}_n-P||^2 + ||\hat{\mu}_{R,n}-\mu_R||^2}$. 
We have
	\[\hat{V}^{*}_n - V^* = \sum^K_{k=1}G_k \mathds{1}\left( \hat{u}_n \in U_k \right) (\hat{P}_n-P, \hat{\mu}_{R,n}-\mu_R) + o_P(\|(\hat{P}_n-P, \hat{\mu}_{R,n}-\mu_R) \|).\] 
The convergence, 
$\sqrt{n}(\hat{V}_n^* - V^{*}) \Rightarrow \sum^K_{k=1} G_k  \mathds{1}\left( Z/\|Z\| \in U_k \right) Z$,
then follows from the continuous mapping theorem. \Halmos

\subsection{Proof of Theorem \ref{thm CMDP}}
	We use the LP representation of the constrained MDP. 
	Define $x_{s,a}$ as the occupancy measure
	\[x_{s,a}=\sum_{t=0}^\infty\gamma^t\PP(S_t=s|S_0\sim \rho),\]
	where $S_0\sim \rho$ denotes that the initial condition of the underlying Markov chain has probability mass function $\rho$.
	Then, $x_{s,a}$ satisfies the LP
	\begin{equation}
	\begin{array}{ll}
	\max&\sum_{s,a}\mu_R(s,a)x_{s,a}\\
	\text{subject to}&\sum_{s,a}\mu_C(s,a)x_{s,a}\leq\eta\\
	&\sum_ax_{s,a}-\gamma\sum_{s',a}P(s|s',a)x_{s',a}=\rho(s),\ \forall s\\
	&x_{s,a}\geq0,\ \forall s,a
	\end{array}\label{LP}
	\end{equation}
	(\eqref{LP} is the dual formulation in the proof of Theorem \ref{thm:non-unique} with an extra constraint.) 
	The objective and the first constraint correspond to the objective and the constraint in the constrained MDP formulation. The second constraint can be deduced by a one-step analysis on the definition of occupancy measure. 
	Let $(x_{s,a}^*)_{s,a}$ denote the optimal solution of the LP \eqref{LP}. Then, the optimal policy can be expressed as
	\[\pi^*(a|s)=\frac{x_{s,a}^*}{\sum_ax_{s,a}^*}.\]
	
	Note that the LP has $m_s+1$ constrains (excluding the non-negativity constrains). Thus, a basis has $m_s+1$ basic variables.
	Moreover, by our assumptions, the optimal solution is unique, which implies that perturbing the parameters $\mu_R,\mu_C$, and $P$ does not immediately imply an overshoot to negative values for the reduced costs of the non-basic variables. In particular, when the perturbation is small enough, we still retain the same optimal basis.
	Next, we consider two cases depending on whether the first constraint is binding or not. 
	
{\bf In the first case}, the first constraint is non-binding. In this case, the optimal policy is deterministic, i.e., for any $s$, $x_{s,a}>0$ for only one $a$. A small perturbation of the parameters still retains the same basic and non-basic variables, and the derived perturbed policy still retains the first constraint non-binding. The analysis then reduces to that of Corollary \ref{cor:V_basic_pi}.
	
{\bf In the second case}, the first constraint is binding.
	In this case, $x_{s,a}>0$ for only one $a$, for all $s$ except one state, $s_r$, where we can have $x_{s_r,a^*_1(s_r)}>0$ and $x_{s_r,a^*_2(s_r)}>0$ for two distinct actions $a^*_1(s_r),a^*_2(s_r)$. We denote the mixing parameter by $\alpha^{*} := \pi^{*}(a^*_1(s_r) |s_r)$, and so $\pi^{*}(a^*_2(s_r) |s_r) = 1- \alpha^{*}$. 
Again, perturbing the parameters retains the same basic and non-basic variables. In particular, the first constraint remains binding in the perturbation, and the perturbed optimal policy $\pi^*$ is still split at the same state and between the same actions.  

We next make a few observations for case 2. First, we define
\[\begin{split}
F_L^{\pi^*}(L,c,P',\alpha')=&L(s) - \left[c(s,a^*(s))+\gamma\sum_{s'}L(s')P(s'|s,a^*(s))\right]\mathds{1}(s\neq s_r) \\
&- \left[\left(c(s_r,a^*_1(s_r))+\gamma\sum_{s'}L(s') P(s'|s,a_1^*(s))\right)\alpha' \right.\\
&\quad \left.+ \left(c(s_r,a^*_2(s_r))+\gamma\sum_{s'}L(s') P(s'|s,a_2^*(s))\right)(1-\alpha')\right]\mathds{1}(s=s_r)
\end{split}\]
Then we have $F_L^{\pi^*}(L^{\pi^*}, \mu_C,P,\alpha^*)=0$ and $I-\gamma P^{\pi^*}$ is invertible.
Thus, by the implicit function theorem, there exists a continuously differentiable function $\phi_L$ such that  $L^{\pi^{*}}= \phi_L(\mu_C, P, \alpha^{*})$. 
Next, define
\[F_{\alpha}^{\pi^*}(c,P',\alpha')=\eta - \rho^T\phi_L(c, P', \alpha').\]
By applying the implicit function theorem again, we have there exists a continuously differentiable function $\phi_{\alpha}$ such that  $\alpha^*= \phi_{\alpha}(\mu_C, P)$. 
Following the same line of arguments as above, we define
\[\begin{split}
F_V^{\pi^*}(V',r',P',\alpha')=&V'(s) - \left[r'(s,a^*(s))+\gamma\sum_{s'}V'(s')P(s'|s,a^*(s))\right]\mathds{1}(s\neq s_r) \\
&- \left[\left(r'(s_r,a^*_1(s_r))+\gamma\sum_{s'}V'(s') P(s'|s,a_1^*(s))\right)\alpha' \right.\\
&\quad \left.+ \left(r'(s_r,a^*_2(s_r))+\gamma\sum_{s'}V'(s') P(s'|s,a_2^*(s))\right)(1-\alpha')\right]\mathds{1}(s=s_r)
\end{split}\]
Then, by the implicit function theorem,
$V^*$ can be viewed as a continuously differentiable function of $\mu_R$, $P$ and $\alpha^*$, i.e., $V^*=\phi_{I}(\mu_R, P, \alpha^*)$.
Since $\alpha^*$ can be viewed as a continuously differentiable function of $\mu_C$ and $P$, $V^*$ can also be viewed as a function of $\mu_R$, $\mu_C$, and $P$, i.e., $V^*=\phi_{V}(\mu_R, \mu_C, P)$.

Lastly, using the delta method, we have
\[\sqrt n(\hat V_n^*-V^*)\Rightarrow N(0,\nabla \phi_{V}(\mu_R, \mu_C, P)\Sigma_{R,C,P} \left(\nabla \phi_{V}(\mu_R, \mu_C, P)\right)^T) \mbox{ as $n\to \infty$},\]
where $\nabla \phi_{V}(\mu_R, \mu_C, P)$ is the Jacobian of $\phi_{V}$ evaluated at $\mu_R, \mu_C, P$,
and $\Sigma_{R,C,P}$ is the estimation covariance matrix of $\mu_R,\mu_C,P$.

To see where the explicit expression for $\Sigma_c$ is from, we note that
\[\frac{\partial \phi_{V}}{\partial (r',c', P')}
=\frac{\partial \phi_{I}}{\partial (r',c',P')} + \frac{\partial \phi_{I}}{\partial \alpha'} \frac{\partial \phi_{\alpha}}{\partial (r',c',P')}.\]
For $\phi_I$, we have
\[\left.\frac{\partial \phi_{I}}{\partial (r', P', \alpha')}\right|_{ r'=\mu_R, P'=P, \alpha'=\alpha^*}= \left. -\left[\frac{\partial F_V^{\pi^*}}{\partial V'} \right]^{-1} \left[ \frac{\partial F_V^{\pi^*}}{\partial r'}, \frac{\partial F_V^{\pi^*}}{\partial P'}, \frac{\partial F_V^{\pi^*}}{\partial \alpha}\right] \right |_{{V'=V^{*}},{r'=\mu_R}, {P'=P}, \alpha'=\alpha^*}\]
and \[\frac{\partial \phi_{I}}{\partial c'}={\bf 0},\]
where
\[\begin{split}
&\left.\frac{\partial F_V^{\pi^*}}{\partial V'}\right|_{{V'=V^{*}},{r'=\mu_R}, {P'=P}, \alpha'=\alpha^*}=I-\gamma P^{\pi^*}, ~~~ 
\left. \frac{\partial F_V^{\pi^*}}{\partial r'}\right|_{{V'=V^{*}},{r'=\mu_R}, {P'=P}, \alpha'=\alpha^*}=-G^{\pi^*},\\ 
& \left. \frac{\partial F_V^{\pi^*}}{\partial P'}\right|_{{V'=V^{*}},{r'=\mu_R}, {P'=P}, \alpha'=\alpha^*}=-H_V^{\pi^*}, 
\mbox{ and } \left. \frac{\partial F_V^{\pi^*}}{\partial \alpha'}\right|_{{V'=V^{*}},{r'=\mu_R}, {P'=P}, \alpha'=\alpha^*}=-h_V.
\end{split}\]
Similarly, for $\phi_{\alpha}$, we have
\[\left.\frac{\partial \phi_{\alpha}}{\partial (c', P')}\right|_{ c'=\mu_C, P'=P}= \left. -\left[\frac{\partial F_{\alpha}^{\pi^*}}{\partial \alpha'} \right]^{-1} \left[ \frac{\partial F_{\alpha}^{\pi^*}}{\partial c'}, \frac{\partial F_{\alpha}^{\pi^*}}{\partial P'}\right] \right |_{{\alpha'=\alpha^{*}},{c'=\mu_C}, {P'=P}}
\mbox{ and } \frac{\partial \phi_{\alpha}}{\partial r'}={\bf 0},\]
where
\[\begin{split}
&\left. \frac{\partial F_{\alpha}^{\pi^*}}{\partial \alpha'}\right |_{{\alpha'=\alpha^{*}},{c'=\mu_C}, {P'=P}} = -\rho^T(I-\gamma P^{\pi^*})^{-1} h_L, ~~~
\left. \frac{\partial F_{\alpha}^{\pi^*}}{\partial c'}\right|_{{V'=V^{*}},{r'=\mu_R}, {P'=P}, \alpha'=\alpha^*}=-\rho^T(I-\gamma P^{\pi^*})^{-1}G^{\pi^*},\\ 
&\mbox{ and }  \left. \frac{\partial F_{\alpha}^{\pi^*}}{\partial P'}\right|_{{V'=V^{*}},{r'=\mu_R}, {P'=P}, \alpha'=\alpha^*}=-\rho^T(I-\gamma P^{\pi^*})^{-1}H_L^{\pi^*}.
\end{split}\]
\Halmos

\section{Proof of the results in Section 4}
\subsection{Proof of lemma \ref{lemma:feasible set}}
	For any given policy $\pi$, by the balance equation for Markov Chains, its induced stationary distribution $w_\pi$ satisfies
	\[\sum_{k,l} w_{\pi}((k-1)m_a+l) P(i|s=k, a=l) \pi(a=j|s=i) = w_{\pi}((i-1)m_a+j)\]
	for any $i\in \mathcal{S}, j\in \mathcal{A}$.
	Summing up across $j$'s for each $i$, we have
	\begin{eqnarray*}
		&&\sum_{j} w_{\pi}((i-1)m_a+j) \\
		&=&  \sum_{j}\sum_{k,l} w_{\pi}((k-1)m_a+l) P(i|s=k, a=l) \pi(a=j|s=i)\\
		&=& \sum_{k,l} w_{\pi}((k-1)m_a+l)P(i|s=k, a=l)
	\end{eqnarray*} 
	On the other hand, for any $w$ in $\mathcal{W}$, $\pi_w$ satisfies
	\begin{eqnarray*}
		&&\sum_{k,l} w((k-1)m_a + l)  P(i|s=k, a=l) \pi_w(a=j|s=i)\\
		&=& \sum_{k,l} w((k-1)m_a + l)  P(i|s=k, a=l)  w((i-1)m_a+j)/ \sum_{u} w((i-1)m_a +u)\\
		&=& \sum_{u} w((i-1)m_a+u)  w((i-1)m_a+j)/ \sum_{u} w((i-1)m_a+u) = w((i-1)m_a+j)
	\end{eqnarray*} 
	for  all $ i\in S$. Thus, $w$ is the stationary distribution of the Markov chain with transition matrix $\tilde{P}^{\pi_w}$. \Halmos

\subsection{Proof of Theorem \ref{thm:Q-OCBA_convergence}}

We use the superscript $n$ to mark the dependence of $B_k$'s and $K$ on $n$, e.g., $B_k^n$ and $K^n$ when the sampling budget is $n$.

{\bf In the first case in the first result,} after collecting $B_{\hat k-1}^n$ data points, as $C_l>0$, $w_{\hat k}(s,a)>0$ for any $(s,a)\in\mathcal{S}\times\mathcal{A}$.
Then, as $B_{\hat{k}}^n - B_{\hat{k}-1}^n\to\infty$, each $(s,a)$ is visited infinitely often. By Theorem \ref{thm:clt_basic}, $\hat Q_{B_{\hat k}^n}\rightarrow Q$ a.s. as $n\rightarrow\infty$. Next, because $\hat c_{ij,B_{\hat k}^n}\rightarrow c_{ij}$ a.s. as $n\rightarrow\infty$, by continuous mapping theorem, we have $\hat\pi_{B_{\hat k}}\rightarrow\pi^e$ a.s. as $n\rightarrow\infty$.

{\bf In the second case in the first result,} because $C_l>0$, there exists $\epsilon>0$ such that $\min_{(s,a)\in \mathcal{S}\times\mathcal{A}}w_k(s,a)>\epsilon$ for all $k=1,2,\dots$. Because $B_k-B_{k-1}\geq N$, for any $(s,a)\in \mathcal{S}\times\mathcal{A}$, there is a positive probability of visiting $(s,a)$ at each stage. Thus, as $K\rightarrow\infty$, each $(s,a)$ is visited infinitely often. The rest follows the same line of argument as in the first case.

{\bf In the second result,} we first introduce a few more notations. Let $b_k=B_k-B_{k-1}$, i.e., the number of steps in the $k$-th iteration.
Let $s_{k,i}$, $1\leq i\leq b_k$, denote the state visited at the $i$-th step in the $k$-th iteration and $a_{k,i}$ denote the action taken then.

Note that $w_k$ is the stationary distribution of $\tilde P^{\pi_k}$, i.e., the Markov chain under the exploration policy for iteration $k$.
Then, by the large deviations theory for the empirical measure of a Markov chain (Theorem 1.2 in \cite{ellis1988large}), we have for any given $\epsilon>0$,
\[
\sup_{(s_{k,0},a_{k,0})\in\mathcal{S}\times\mathcal{A}} \mathbb{P}\left(\left.\sup_{(s,a)\in\mathcal{S}\times\mathcal{A}}\left|\frac{1}{b_k}\sum^{b_k}_{i=1} \mathds{1}(s_{k,i} = s, a_{k,i} =a) - w_k(s,a)\right|\geq \epsilon \right|s_{k,0},a_{k,0}\right) \leq \exp(-b_kI(\epsilon,w_k))
\]
for some large deviations rate function $I$, where $I(\epsilon,w_k)>0$ when $\epsilon>0$ and $I$ is continuous in $w_k$ ($I$ is continuous in $\tilde P^{\pi_k}$, which is in turn continuous in $w_k$). Because $w_k\rightarrow w^*$ as $k\rightarrow\infty$ and $I(\epsilon,w^*)>0$, there exists $\bar I(\epsilon)>0$ such that $\min_{k\geq 1}I(\epsilon,w_k)\geq \bar I(\epsilon)$. Then, as $b_k\geq \gamma k$,
\[\begin{split}
&\sum_{k=1}^{\infty}\sup_{(s_{k,0},a_{k,0})\in\mathcal{S}\times\mathcal{A}} \mathbb{P}\left(\left.\sup_{(s,a)\in\mathcal{S}\times\mathcal{A}}\left|\frac{1}{b_k}\sum^{b_k}_{i=1} \mathds{1}(s_{k,i} = s, a_{k,i} =a) - w_k(s,a)\right|\geq \epsilon \right|s_{k,0},a_{k,0}\right)\\
\leq&\sum_{k=1}^{\infty} \exp(-\gamma k\bar I(\epsilon))<\infty
\end{split}\]
By Borel-Cantelli Lemma, 
\[\frac{1}{b_k}\sum^{b_k}_{i=1} \mathds{1}(s_{k,i} = s, a_{k,i} =a) - w_k(s,a)  \rightarrow 0 \text{ a.s.\ as\ $k \to \infty$.}\]

Next, note that for any $(s,a)\in\mathcal{S}\times\mathcal{A}$,
\[\begin{split}
&\frac{1}{B_{K}}\sum^{K}_{k=1} \sum^{b_k}_{i=1} \mathds{1}(s_{k,i} = s, a_{k,i} =a) - w^*(s,a)\\
=& \sum^{K}_{k=1}\frac{b_k}{B_{K^n}} \left(\frac{1}{b_k}\sum^{b_k}_{i=1}  \mathds{1}(s_{k,i} = s, a_{k,i} =a) -w_k(s,a) +w_k(s,a)-w^*(s,a)\right) \\
=&\frac{1}{B_{K}}\sum^{K}_{k=1} \left(\frac{1}{b_k}\sum^{b_k}_{i=1}  \mathds{1}(s_{k,i} = s, a_{k,i} =a) -w_k(s,a)\right)b_k + \sum^{K}_{k=1}\frac{b_k}{B_{K}}(w_k(s,a)-w^*(s,a))
\end{split}\]
As 
\[\frac{1}{b_k} \sum^{b_k}_{i=1} \mathds{1}(s_{k,i} = s, a_{k,i} =a) - w_k(s,a)  \rightarrow 0 \mbox{ and }
w_k\rightarrow w^* \mbox{ a.s.\ as\ $k \to \infty$,}\]
by Silverman–Toeplitz Theorem,
\[
\frac{1}{B_{K}}\sum^{K}_{k=1} \sum^{b_k}_{i=1} \mathds{1}(s_{k,i} = s, a_{k,i} =a) - w^*(s,a) \rightarrow 0 \mbox{ a.s.\ as\ $n\rightarrow\infty$.}
\]
\Halmos

\section{Proof of the Results in Section 6}

\subsection{Proof of Theorem \ref{thm:clt_large}}
	Denote  
	\[[\hat{\mu}_{R,n}, \hat{P}_n]_{\mathcal{S}_0}=[\hat{\mu}_{R,n}((i-1)m_a+j), \hat{P}_n((i-1) N+(j-1)m_s + k)]_{i\in \mathcal{S}_0, 1\leq j\leq m_a, 1\leq k\leq m_s},\] and 
	\[[\mu_R, P]_{\mathcal{S}_0}=[\mu_R((i-1)m_a+j), P((i-1) N+(j-1)m_s + k)]_{i\in \mathcal{S}_0, 1\leq j\leq m_a, 1\leq k\leq m_s}.\] 
	By Assumption \ref{ass:large_state_stationary}, we have 
	\[\sqrt{n}([\hat{\mu}_{R,n}, \hat{P}_n]_{\mathcal{S}_0} - [\mu_R, P]_{\mathcal{S}_0}) \Rightarrow \mathcal{N}(0,\Sigma_{R, P, \mathcal{S}_0}) \mbox{ where }
	\Sigma_{R, P, \mathcal{S}_0} = 
	\begin{pmatrix} W_{\mathcal{S}_0}^{-1}D^{\mathcal{S}_0}_R  &  {\bf 0}  \\  {\bf 0} & D^{\mathcal{S}_0}_P \end{pmatrix}.\]
	
	Notice $M_g\circ M^{\mathcal{S}_0}_I\circ \mathcal{T}_{\hat{\mu}_{R,n}, \hat{P}_n}$ only involves random variables
	$[\hat{\mu}_{R,n}, \hat{P}_n]_{\mathcal{S}_0}$. Changing the distribution of $[\hat{\mu}_{R,n}, \hat{P}_n]_{\mathcal{S}\setminus \mathcal{S}_0}$ will not change the distribution of $\hat{Q}_n^M$. We can thus assign auxiliary random variables to  $\hat{\mu}_{R,n}$ and $\hat{P}_n$ for all $i \notin \mathcal{S}_0$, $1\leq j \leq m_a$, $1\leq k \leq m_s$. In particular, we use independent random variables for each $i\notin \mathcal{S}_0$ by letting
	\[\hat{\mu}_{R,n}((i-1)m_a+j) \overset{D}{=} \frac{1}{\sqrt{n}} \mathcal{N}(\mu_R((i-1) m_a+j), 1)\]
	\[\hat{P}_n((i-1)N+(j-1)m_s + k) \overset{D}{=} \frac{1}{\sqrt{n}} \mathcal{N}(P((i-1)N+(j-1)m_s + k), 1).\] 
	Doing so, we extend the $m_{s_0} m_a$-dimensional random variable $[\hat{\mu}_{R,n}, \hat{P}_n]_{\mathcal{S}_0}$ to an $m_s  m_a$-dimensional random variable $[\hat{\mu}_{R,n}, \hat{P}_n]_{\mathcal{S}}$ and 
	\[\sqrt{n}([\hat{\mu}_{R,n}, \hat{P}_n]_{\mathcal{S}} - [\mu_R, P]_{\mathcal{S}}) \Rightarrow \mathcal{N}(0,\Sigma_{R, P, \mathcal{S}}) \mbox{ where
		$\Sigma_{R, P, \mathcal{S}} = 
		\begin{pmatrix} \Sigma_{R, P, \mathcal{S}_0} & {\bf 0} \\ {\bf 0}  & I \end{pmatrix}$}.\]
	
	Similar to the proof of Theorem \ref{thm:clt_basic}, define 
	\[F_M(Q',r',P') = Q' - M_g \circ M_I^{\mathcal{S}_0}\circ \mathcal{T}_{r', P'}(Q').\] 
	By Assumption \ref{ass:large_state_unique}, $M_g$ is max-norm non-expansion. Then, $M_g \circ M^{\mathcal{S}_0}_I$ is also max-norm non-expansion, which implies that $\nabla (M_g \circ M^{\mathcal{S}_0}_I)$ has all its eigenvalues less than or equal to $1$. Thus,  
	\[\frac{\partial F_M}{\partial Q'} = \nabla M(\mathcal{T}_{r', P'}(Q'))(I - \gamma \tilde{P}')\] 
	is invertible. By Assumption \ref{ass:large_state_unique}, we have $F_M(Q^M,\mu_R,P) = 0$. By Assumption \ref{ass:large_state_unique2}, there exists a neighborhood $\Omega_M$ around $(Q^M,\mu_R,P)$, such that $F_M$ is continuously differentiable on $\Omega_M$.
	Then, applying the implicit function theorem, we have that there exists an open set $E_M \subset \Omega_M$ and a continuously differentiable function $\phi_M$ on $ E_M$, such that $\phi_M(\mu_R,P) = Q^M$  and 
	\[ \nabla \phi_M(\mu_R,P) = \left. -\left[\frac{\partial F_M}{\partial Q'} \right]^{-1} \left[ \frac{\partial F_M}{\partial r'}, \frac{\partial F_M}{\partial P'}\right] \right |_{{Q'=Q^M},{r'=\mu_R}, {P'=P}}.\]
	Using the delta method, we have
	\[\sqrt{n}(\hat{Q}^M_n - Q^M) \Rightarrow \mathcal{N}(0,\nabla \phi_M(\mu_R,P) \Sigma_{R, P, \mathcal{S}} (\nabla \phi_M(\mu_R,P))^T ) = \mathcal{N}(0, \Sigma^M_{\mathcal{S}_0} ) \mbox{ as $n\rightarrow\infty$.}\]
\Halmos

\subsection{Proof of Lemma \ref{thm:CLT_kernel_P}}
We have
\[\begin{split}
\sqrt{n} (\hat M_n - M^*) 
&= A_n^{-1} \sqrt{n} \left(\sum_{i=1}^n \phi(s_{i}, a_{i}) \psi(s^{\prime}_i) K_{\psi}^{-1} - \phi(s_{i}, a_{i})\phi^T(s_{i}, a_{i}) M^*\right) - \sqrt{n}A_n^{-1}M^{*}\\
&=\left(\frac{1}{n}A_n\right)^{-1}
\sqrt{n}\left(\frac{1}{n}\sum_{i=1}^{n}G(x_i)\right) -\left(\frac{1}{n}A_n\right)^{-1}\frac{1}{\sqrt{n}}M^*. 
\end{split}\]
We first note that as $n\rightarrow\infty$,
\[
\frac{1}{\sqrt{n}}M^{*} \Rightarrow 0
~\mbox{ and }~
\frac{1}{n}A_n \Rightarrow \mathbb{E}_{\bar w}[\phi(s(\bar X), a(\bar X))\phi^T(s(\bar X), a(\bar X))] = \sum_{s,a} w_{s,a} \phi(s, a)\phi(s, a)^T.
\]
We next note that because \[\mathbb{E}_{\bar w}[G_v(\bar X)]=
\sum_{(s,a)\in\mathcal{S}\times\mathcal{A}}w_{s,a}
\left(\phi(s,a)\phi(s,a)^TM^*\left(\sum_{s^{\prime}\in\mathcal{S}}\psi(s^{\prime})\psi(s^{\prime})^T\right)K_{\psi}^{-1} - \phi(s,a)\phi(s,a)^TM^*\right)
=0.
\]
By Markov Chain Central Limit Theorem, we have
\[
\sqrt{n}\left(\frac{1}{n}\sum_{t=1}^{n}G(\bar X_t)\right)
\Rightarrow \mathcal{N}(0, \Sigma_{G})
\mbox{ as $n\rightarrow\infty$,}
\]
where
\[\Sigma_{G} = \Var (G_v(X_0)) + 2\sum^{\infty}_{i=1} \Cov(G_v(X_0), G_v(X_i)).\]
%
%
%
Then, by Slutsky's theorem,
\[ \sqrt{n} (M_{v,n} - M^*_v) \Rightarrow N(0,\Upsilon_E^{-1}\Sigma_{G}\Upsilon_E^{-1}) \mbox{ as $n\rightarrow\infty$.}\] 
Because $\hat P_n^K(s^{\prime}|s,a)=F_{\phi,\psi}(s,a,s^{\prime})\hat M_{n,v}$,
\[\sqrt{n} (\hat P_n^K(s'|s,a) - P(s'|s,a)) \Rightarrow \mathcal{N}(0, F_{\phi, \psi}(s,a,s') \Upsilon_E^{-1}\Sigma_{G}\Upsilon_E^{-1}  (F_{\phi, \psi}(s,a,s'))^T) \mbox{ as $n\rightarrow\infty$.}\]
\Halmos

\subsection{Proof of Lemma \ref{thm:CLT_kernel_R}}
We have
\[\begin{split}
\sqrt{n}(\hat{\theta}_{\mu,n} - \theta_{\mu})
&=(\Phi^T_{\mu, n} \Phi_{\mu, n} )^{-1}\sqrt{n}\sum_{i=1}^{n}\phi_{\mu}(s_i,a_i)(r_i-\phi_{\mu}(s_i,a_i)^T\theta_{\mu})\\
&=\left(\frac{1}{n}\Phi^T_{\mu, n} \Phi_{\mu, n} \right)^{-1}\sqrt{n}\left(\frac{1}{n}\sum_{i=1}^{n}H(\tilde x_i)\right).
\end{split}\] 
We first note that
\[\frac{1}{n}\Phi^T_{\mu, n} \Phi_{\mu, n}\Rightarrow \Upsilon_{\mu} \mbox{ as $n\rightarrow\infty$}.\]
Next, because $\mathbb{E}_{w}[H(\tilde X)]=0$, by Markov Chain Central Limit Theorem, we have
\[
\sqrt{n}\left(\frac{1}{n}\sum_{i=1}^{n}H(\tilde x_i)\right)\Rightarrow \mathcal{N}(0, \Sigma_{H})
\mbox{ as $n\rightarrow\infty$,}
\]
where
\[
\Sigma_{H} = \Var_{w}(H(\tilde X_0)) + 2\sum^{\infty}_{t=1} \Cov_w(H(\tilde X_0), H(\tilde X_t)).
\]
Then, by Slutsky's theorem, 
\[ \sqrt{n}(\hat{\theta}_{\mu,n} - \theta_{\mu})\Rightarrow  \mathcal{N}(0, \Upsilon_{\mu}^{-1}\Sigma_{H}\Upsilon_{\mu}^{-1}) \]
Lastly, because $\hat\mu_{R,n}(s,a)=\phi_{\mu}(s,a)^T\hat\theta_{\mu,n}$,
\[ \sqrt{n}(\hat{\mu}_{R, n}(s,a) - \mu_R(s,a)) \Rightarrow N\left(0, \phi_{\mu}(s,a)^T\Upsilon_{\mu}^{-1}\Sigma_{H}\Upsilon_{\mu}^{-1}\phi_{\mu}(s,a)\right) \mbox{ as $n\rightarrow \infty$.}\]
\Halmos

\subsection{Proof of Theorem \ref{thm:CLT_kernel_Q}}
The proof of Theorem \ref{thm:CLT_kernel_Q} is similar to that of Theorem \ref{thm:clt_basic}.
We first note that under Assumption \ref{assum:ind}, 
by Lemmas \ref{thm:CLT_kernel_P} and \ref{thm:CLT_kernel_R},
\[
\sqrt{n}([\hat{\mu}_{R,n}, P_n] - [\mu_{R},P]) \Rightarrow \mathcal{N}(0, \Sigma^{K}_{R,P}),
\mbox{ as $n\rightarrow\infty$},
\]	
where 
\[
\Sigma^{K}_{R,P} = \begin{pmatrix} D^K_R &  {\bf 0} \\  {\bf 0} &  D^K_P \end{pmatrix}, 
~~~ D^K_P = F_{\phi, \psi} \Upsilon_E^{-1}\Sigma_{G}\Upsilon_E^{-1}  F^T_{\phi, \psi}, ~~~
D^K_R = \phi_{\mu}^T\Upsilon_{\mu}^{-1}\Sigma_{H}\Upsilon_{\mu}^{-1}\phi_{\mu}. 
\]
The covariance block ${\bf 0}$ in $\Sigma^{K}_{R,P}$ is due to the fact that under Assumption \ref{assum:ind},
\[
\mathbb{E}_{\bar w}\Big[\left(r(\bar X)-\phi_{\mu}(\bar X)^T\theta_{\mu}\right)
\left(\phi(s(\bar X), a(\bar X)) \psi(s^{\prime}(\bar X)) K_{\psi}^{-1} - \phi(s(\bar X), a(\bar X))\phi(s(\bar X), a(\bar X))^T M^*\right)\Big]
=0.
\]
Next, define $F(Q',r',P')$ as a mapping from $\mathbb{R}^{N}\times \mathbb{R}^{N}\times \mathbb{R}^{N m_s}$ to $\mathbb{R}^{N}$, representing the fixed point equation for the Bellman operator \eqref{eq:bellman}. Specifically,
\[\begin{split}
F(Q',r',P')((i-1)m_a+j) 
=& Q'((i-1)m_a+j) -r'((i-1)m_a+j)\\
&-\gamma \sum_{1\leq k\leq m_s}P'((i-1)N +(j-1)m_s+k) \text{ } g_k(Q'),
\end{split}\]  
for $1\leq i \leq m_s$ and $1\leq j \leq m_a$, where $g_k(Q') = \max_l Q'((k-1)m_a+l)$, for $1\leq k \leq m_s$.
We can then apply the implicit function theorem to the equation $F(Q,\mu_R, P) = 0$. 
In particular, there exists an open set $U\subset\mathbb{R}^{N} \times \mathbb{R}^{N m_s}$, around $(\mu_R,P)$, and a unique continuously differentiable function $\nu$: $U\rightarrow \mathbb{R}^{N}$, such that $\nu(\mu_R,P) = Q$, and for any $(r',P')\in U$
\[ F(\nu(r', P'), r', P') =0. \] 
Using the delta method, we have
\[	\sqrt{n}(\hat{Q}^K_n- Q)
= \sqrt{n}(\nu(\hat{\mu}_{R,n}, \hat P_n^K) - \nu(\mu_R,P))
\Rightarrow \mathcal{N}(0, \nabla \nu(\mu_R,P) \Sigma^K_{R,P} (\nabla  \nu(\mu_R,P))^T)\mbox{ as $n\to\infty$}.\]
Plugging in the formulas for $\nabla \nu(\mu_R,P)$ and $\Sigma^K_{R,P}$, we have
\[\nabla\nu(\mu_R,P) \Sigma^K_{R,P} \nabla\nu(\mu_R,P)^T=(I-\gamma \tilde{P}^{\pi^{*}})^{-1}
[D^K_R+\gamma^2V_DD_P^KV_D]((I-\gamma \tilde{P}^{\pi^{*}})^{-1})^{T}.
\]
\Halmos

\section{Optimal Exploration under a Different Objective}

In Q-OCBA, the sequential exploration policy is derived by maximizing the worst-case relative discrepancy among all Q-value estimates. 
If one is interested in obtaining the best estimate of $\chi^*$ (i.e., the optimal value function initialized according to $\rho$), then it would be more natural to solve
\begin{equation}
\min_{w\in\mathcal{W}_{\eta}} \sigma_{\chi}^2\label{var min}.
\end{equation}
From \eqref{var min}, the optimal exploration policy $\pi_w$ can be derived based on Lemma \ref{lemma:feasible set}. The motivation is that by doing so, the half-width of the CI for $\chi^*$ is minimized. We refer to this policy as $\chi^*$-OCBA

Table \ref{tab:CI_len_comparison} compares the half-width of $95\%$ CI for $\chi^*$ constructed using data collected under different exploration policies. 
The half-width is estimated based on $10^3$ independent replications of the procedures and the estimation errors (half width of the 95\% CIs) are around $0.01$.
We vary $r_L$ from $1$ to $3$. 
The total sampling budget is $n=10^4$. 
$30\%$ of the budget is used to collect some initial data under RE with $\pi(1|s)=0.8$. We then use the initial data to train $\epsilon$-greedy with $\epsilon=0.2$, UCRL, PSRL, Q-OCBA, and $\chi^*$-OCBA. This two-stage implementation creates a warm start to help UCRL and PSRL achieve the correct coverage rate, i.e., $95\%$. In particular, using this implementation, all policies in Table \ref{tab:CI_len_comparison} are able to achieve the nominal coverage.

We observe that $\chi^*$-OCBA leads to shorter CIs in $\chi^*$ estimation compared to benchmark policies. 
For example, when $r_L=3$, the half-width of the CI constructed from $\chi^*$-OCBA is at least $40\%$ less than those constructed using PSRL or UCRL. The performances of $\chi^*$-OCBA and Q-OCBA appear quite similar, with $\chi^*$-OCBA moderately better. For example, when $r_L=3$, the half-width of the CI constructed from $\chi^*$-OCBA is $24\%$ less than that using Q-OCBA. We attribute the improvement to the targeted criterion on $\chi^*$ estimation used in our policy design of $\chi^*$-OCBA.
We also note that the performance of $\chi^*$-OCBA is robust against different values of $r_L$.
This is not the case for some benchmark policies
such as PSRL, which is designed to strike a balance between exploration and exploitation. 
For example, when $r_L=1$, PSRL can perform as well as Q-OCBA, with the half-width of the CI being $2.16$ for PSRL, and $2.2$ for $\chi^*$-OCBA. 
However, when $r_L=2$ or $3$, $\chi^*$-OCBA demonstrates significant advantages over PSRL. Specifically, the half width of the CI using $\chi^*$-OCBA is less than $58\%$ of that using PSRL.  
This observation is consistent with our observation from Figure \ref{fig:seqm} where we consider the probability of correct selection.

\begin{table}[H] 
	\caption{Comparison of CI half-widths for different exploration policies} \label{tab:CI_len_comparison}
	\centering
		\begin{tabular}{c|c|c|c|c|c|c}\hline
			$r_L$ &  $0.2$-greedy & RE(0.6) & PSRL & UCRL &  Q-OCBA & $\chi^*$-OCBA  \\ \hline
			
			1 & $3.50$ & $3.47$ & $2.16$ & $6.25$&  $2.58$ & $2.20$   \\ 
			2 & $3.12$ &$2.07$  & $3.26$ &$3.31$ &  $1.38$ & $1.35$   \\ 
			
			3 &$1.47$  & $1.48$
			&  $1.40$ &$1.40$ & $1.06$ & $0.81$    \\ \hline
		\end{tabular}
\end{table}

\end{APPENDICES}

\bibliographystyle{informs2014}
\bibliography{Q_OCBA_ref1.bib}

\end{document}